\newtheorem{theorem}{Theorem}[section]
\newtheorem{definition}{Definition}[section]
\newtheorem{lemma}{Lemma}[section]
\newtheorem{corollary}{Corollary}[section]
\newcommand{\vv}{\ensuremath{\mathbf v}}
\newcommand{\uu}{\ensuremath{\mathbf u}}
\DeclareMathOperator*{\argmin}{arg\,min}
\def\reals{{\mathcal R}}
\def\fhat{{\ensuremath{\hat{f}}}}
\newcommand{\K}{\mathcal{K}}
\newcommand{\ignore}[1]{}
\def\reals{{\mathbb R}}
\def\bold0{\mathbf{0}}
\newcommand\E{\mbox{\bf E}}
\def\x{\mathbf{x}}
\def\y{\mathbf{y}}
\newcommand{\B}{\mathbb{B}}
\newcommand{\Sp}{\mathbb{S}}
\newcommand{\eps}{\varepsilon}
\title{On Graduated Optimization for  Stochastic\\ Non-Convex Problems }
\author{%
Elad Hazan\footnote{Princeton University; \texttt{ehazan@cs.princeton.edu}.}
\and
Kfir Y. Levy\footnote{Technion; \texttt{kfiryl@tx.technion.ac.il}.}
\and 
Shai Shalev-Shwartz\footnote{The Hebrew University; \texttt{shais@cs.huji.ac.il}.}
}
\date{May 2014}                                           
\begin{document}
\maketitle

\begin{abstract} 
The graduated optimization approach, also known as the continuation method, is a popular heuristic to solving non-convex problems that has received renewed interest over the last decade. 
Despite its popularity, very little is known in terms of theoretical convergence analysis.
 
In this paper we describe a new first-order algorithm based on graduated optimization and analyze its performance. We characterize a parameterized family of non-convex functions for which this algorithm provably converges to a {\bf global} optimum. In particular, we prove that the algorithm  converges to an $\eps$-approximate solution within $O(1 /  \eps^2)$ gradient-based steps. 
We extend our algorithm and  analysis  to the setting of stochastic non-convex optimization with  noisy gradient feedback, attaining the same convergence rate.
Additionally, we discuss the  setting of ``zero-order optimization", and devise a a variant of our algorithm  which converges at rate of  $O(d^2/  \eps^4)$.
 
 \end{abstract} 

\section{Introduction}
\label{section:intro}

Non-convex optimization programs are ubiquitous in machine learning and computer vision.  Of particular interest are non-convex optimization problem that arise in the training of deep neural networks \cite{bengio2009learning}.  Often, such problems admit a multimodal structure, and therefore, the use of convex optimization machinery  may lead to a local optima. 

Graduated optimization (a.k.a. continuation), \cite{blake1987visual}, is a methodology that attempts to overcome such numerous local optima.   At first, a  coarse-grained  version of the problem is generated by a local smoothing operation. This coarse-grained version  is easier to solve.  Then, the method  advances in stages by  gradually refining the problem versions,  using the solution of the previous stage as an initial point for the optimization in the next stage. 

Despite its popularity, there are still many gaps concerning both theoretical and practical aspects of  graduated optimization, and in particular we are not aware of a rigorous  running time analysis to find a global optimum, or even conditions in which a global optimum is reached.  Nor are we familiar with graudated optimization in the stochastic setting, in which only a noisy gradient or value oracle to the objective is given. 
Moreover, any practical application of graduated optimization requires an efficient construction of coarse-grained versions of the original function. 
For some special cases this construction can be made analytically \cite{chapelle2006continuation, chaudhuri2011smoothing} . However, in the general case,
 it is commonly suggested in the literature  to convolve the original function with a gaussian kernel \cite{wu1996effective}. Yet, this operation is prohibitively inefficient in high dimensions.

In this paper we take an  algorithmic / analytic  approach to  graduated optimization and show the following. 
\begin{itemize}
\item
We characterise a family of non-convex multimodal functions that allows convergence to a {\it global} optimum. This parametrized family we call $\sigma$-nice (see Definition~\ref{def:SigmaNiceDef} ).
\item
We provide a stochastic algorithm inspired by graduated optimization,  that performs only gradient updates and is ensured to find an $\eps$-optimal solution of $\sigma$-nice functions within $O(1/ \sigma^2 \eps^2)$ iterations.  The algorithm doesn't require expensive convolutions and access the smoothed version of any function using random sampling.  
The algorithm only requires  access to the objective function through a \emph{noisy} gradient oracle.

\item
We extend our method to the ``zero-order optimization" model (a.k.a. ``bandit feedback" model), in which the objective is only accessible through a noisy value oracle. We devise a variant of our algorithm that is guaranteed  to find an $\eps$-optimal solution within $O(d^2/\sigma^2 \eps^4)$ iterations. 

\end{itemize}

Interestingly, the next question is raised in \cite{bengio2009learning} which reviews recent developments in the field of deep learning:
\textbf{``Can optimization strategies based on continuation methods deliver significantly improved training of deep architectures?"}

As an initial empirical study, we examine the task of training a NN (Neural Network) over the MNIST data set. 
Our experiments support the theoretical guarantees, demonstrating that graduated optimization according to the methodology proposed  accelerates convergence in training the NN. 
Moreover, we show examples in which  $\sigma$-nice functions capture  non-convex structure/phenomena that exists in natural data.

\subsection{Related Work} 
Among the machine vision community, the idea of graduated optimization was known since the 80's. The term
``Graduated Non-Convexity" (GNC) was  coined by \cite{blake1987visual},  who were  the first to establish this idea explicitly. 
Similar attitudes in the  machine vision literature  appeared later in \cite{yuille1989energy, yuille1990stereo}, and \cite{terzopoulos1988computation}.

Concepts of the same nature appeared in the optimization literature~\cite{wu1996effective}, and in the field of  numerical analysis \cite{allgower1990numerical}.

Over the last two decades, this concept was successfully applied to numerous problems in computer vision; among are: image deblurring \cite{boccuto2002gnc}
, image restoration \cite{nikolova2010fast}, and optical flow \cite{brox2011large}.
The method was also adopted by the machine learning community, demonstrating effective performance in tasks 
such as semi-supervised learning \cite{chapelle2006continuation}, graph matching \cite{zaslavskiy2009path}, and ranking \cite{chapelle2010gradient}.
In \cite{bengio2009learning}, it is suggested to consider some developments in deep belief architectures \cite{hinton2006fast,erhan2009difficulty} as a kind of continuation. These approaches, in the  spirit of the continuation method,  offer no guarantees on the quality of the obtained solution, 
and are tailored to specific applications.

A comprehensive survey of the graduated optimization literature can be found in \cite{mobahi2015link}.

A recent work \cite{mobahi2015theoretical} 
advances our theoretical understanding, by analyzing a continuation algorithm in the general setting. Yet, they offer no way to perform the smoothing efficiently,
nor a way to optimize the smoothed versions; but rather assume that these are possible. Moreover, their guarantee is limited to a fixed precision  that depends on the  objective function and does not approach zero. In contrast, our approach can generate arbitrarily precise solutions.

\section{Setting}
\label{sec:setting}
We discuss an optimization of a  \emph{non-convex} loss function $f:\K\mapsto \reals$, where $\K \subseteq \reals^d$ is a convex set. We assume that optimization lasts for $T$ rounds; On each round $t=1,\ldots,T$, we may query a point $\x_t\in\K$, and receive a \emph{feedback}.
After the last round,  we choose $\bar{\x}_T\in\K$, and our performance measure is  the excess loss, defined as:
$$f(\bar{\x}_T)- \min_{\x\in\K}f(\x) $$
In Section~\ref{sec:SigmaNiceDef} we characterize a family of non-convex multimodal functions we call $\sigma$-nice.
Given such a $\sigma$-nice loss $f$, we are interested in algorithms that with a high probability ensure a $\eps$-excess loss within $\text{poly}(1/\eps)$ rounds. 

We consider two kinds of feedback:
\begin{enumerate}
\item \textbf{Noisy Gradient feedback:} Upon querying $\x_t$ we receive $\nabla f(\x_t)+\xi_t$, where $\{\xi_\tau\}_{\tau=1}^T$ are independent zero mean and bounded r.v.'s. 
\item \textbf{Noisy Value feedback (Bandit feedback):} Upon querying $\x_t$ we receive $f(\x_t)+\xi_t$, where $\{\xi_\tau\}_{\tau=1}^T$ are independent zero mean and bounded r.v.'s. 
\end{enumerate}

\section{Preliminaries and Notation}
\paragraph{Notation:} During this paper we use $\B,\Sp$ to denote the
unit Euclidean ball/sphere in $\reals^d$, and also $\B_r(\x),\Sp_r(\x)$ as the Euclidean $r$-ball/sphere in $\reals^d$ centered at $\x$. For a set $A\subset \reals^d$ , 
$\uu \sim A$ denotes a random variable distributed uniformly over $A$.
\subsection{Strong-Convexity}
Recall the definition of strongly-convex functions:
\begin{definition}\textbf{(Strong Convexity)}
We say that a function $F: \reals^n \to \reals$ is $\sigma$-strongly convex over the set $\K$ if for all $\x,\y \in \K$ it holds that,
\begin{align*}
F(\y) \geq F(\x) + \nabla F(\x)^\top(\y-\x) + \frac{\sigma}{2}\|\x - \y\|^2
\end{align*}
\end{definition}
Let $F$ be a $\sigma$-strongly convex over convex set $\K$, and let $\x^*$ be a point in $\K$ where $F$ is minimized, then the following inequality is satisfied:
\begin{align} \label{eq:strongCvxity}
\frac{\sigma}{2}\|\x-\x^*\|^2 \leq F(\x)-F(\x^*)
\end{align}
This is immediate by the definition of strong convexity combined with $\nabla F(\x^*)^\top(\x-\x^*)\geq 0,\; \forall \x\in \K $.

\section{Smoothing and $\sigma$-Nice functions}
Constructing finer and finer approximations to the original objective function is at the heart of  the continuation approach. 
In Section~\ref{sec:smoothing} we define the smoothed versions that we will employ. Next, in Section~\ref{sec:ImplicitSmoothing} we describe an efficient way to implicitly access the smoothed versions, which will enable us to perform optimization. Finally, in Section~\ref{sec:SigmaNiceDef} we define a class of non-convex multimodal functions we denote as \emph{$\sigma$-nice}. As we will see in Section~\ref{section:experiments}, these functions are rich enough to capture non-convex structure that exists in natural data. Additionally, these functions lend themselves to an efficient optimization, and we can ensure a convergence to $\eps$-solution within
 $\text{poly}(1/\eps)$ iterations, as described  in Sections~\ref{sec:SigmaNiceOptGrad},\ref{sec:SigmaNiceOptVal}.

\subsection{Smoothing}
\label{sec:smoothing}
Smoothing by local averaging is  formally defined next. 
\begin{definition}
Given an $L$-Lipschitz  function $f: \reals^d \mapsto \reals$ define it's $\delta$-smooth version to be
$$ \fhat_\delta(\x) = \E_{\uu \sim \B } [f(\x + \delta \uu) ]  .$$
\end{definition}
The next lemma bounds the bias between $\fhat_\delta$ and $f$.
\begin{lemma}
	\label{lem:SmoothingLemma}
	Let $\hat{f}_\delta$ be the $\delta$-smoothed version of $f$, then,
		 $$\forall \x \in \reals^d : |\hat{f} _\delta (\x) - f(\x) | \le \delta L$$
\end{lemma}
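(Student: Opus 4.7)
The plan is to bound the difference pointwise and then integrate, exploiting the Lipschitz property of $f$ together with the fact that the sampling distribution is supported on the unit ball.

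First I would rewrite the difference by linearity of expectation and the fact that $f(\x)$ is a constant with respect to the integration variable $\uu$:
\begin{align*}
\hat{f}_\delta(\x) - f(\x) = \E_{\uu \sim \B}[f(\x + \delta \uu)] - f(\x) = \E_{\uu \sim \B}[f(\x + \delta \uu) - f(\x)].
\end{align*}
Next I would pull the absolute value inside the expectation using Jensen's inequality (or the triangle inequality for integrals), giving
\begin{align*}
|\hat{f}_\delta(\x) - f(\x)| \le \E_{\uu \sim \B}\bigl[|f(\x + \delta \uu) - f(\x)|\bigr].
\end{align*}
Then I would apply the $L$-Lipschitz assumption on $f$ to bound the integrand by $L \|\delta \uu\| = \delta L \|\uu\|$, and finally use the fact that $\uu \in \B$ means $\|\uu\| \le 1$ almost surely, so the expectation is at most $\delta L$.

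There is essentially no obstacle here: the argument is a two-line chain combining Jensen with the Lipschitz bound, and the uniform-on-the-ball distribution only enters through the harmless bound $\|\uu\|\le 1$. The only thing to be slightly careful about is that the statement is a two-sided inequality, which is automatically handled by working with $|\cdot|$ from the start rather than signing the difference.
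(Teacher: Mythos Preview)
Your proposal is correct and follows exactly the same route as the paper: rewrite the difference as an expectation, apply Jensen's inequality to bring the absolute value inside, use the $L$-Lipschitz bound on the integrand, and conclude via $\|\uu\|\le 1$ for $\uu\in\B$. There is nothing to add.
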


\begin{proof} [Proof of Lemma \ref{lem:SmoothingLemma}]
	\begin{align*}
		|\hat{f}_\delta (\x)-f(\x)|&=|\E_{u \sim \B } \left[  f(\x+ \delta \uu)\right] - f(\x) |  \\
		&\leq \E_{\uu \sim \B } \left[  |f(\x+ \delta \uu) - f(\x) |\right]   \\
		&\leq \E_{\uu \sim \B }\left[ L\| \delta \uu \| \right]   \\
		&\leq L \delta 
	\end{align*}
in the first inequality we used Jensen's inequality, and in the last inequality we used $\|\uu\|\leq 1$, since $\uu\in \B$.
\end{proof}

\subsubsection{Implicit Smoothing using Sampling}
\label{sec:ImplicitSmoothing}

A direct way to optimize a smoothed version is by direct calculation of its gradients, nevertheless this calculation might be very costly in high dimensions.
A much more efficient approach is to produce an unbiased estimate for the gradients of the smoothed version by sampling the function gradients/values. These estimates could then be used by a stochastic optimization algorithms such as SGD (Stochastic Gradient Descent).
This sampling approach is outlined in Figures~\ref{fig:SGO_G},\ref{fig:SGO_V}.

\begin{figure}[t]
\begin{framed}
 \textbf{Oracle 1}:  $\text{SGO}_G$\\
 \textbf{Input}:  $\x\in \reals^d$, smoothing parameter $\delta$\\
\textbf{Return}: $\nabla f(\x+\delta \uu)$, where $\uu\sim\B$
\end{framed}
\caption{ Smoothed gradient oracle given gradient feedback.}
\label{fig:SGO_G}
\end{figure}

\begin{figure}[t]
\begin{framed}
 \textbf{Oracle 2}:  $\text{SGO}_V$\\
 \textbf{Input}:  $\x\in \reals^d$, smoothing parameter $\delta$ \\
  \textbf{Return}: $\frac{d}{\delta}f(\x+\delta \vv) \vv $,  where $\vv\sim\Sp$
  \end{framed}
\caption{ Smoothed gradient oracle given value feedback.}
\label{fig:SGO_V}
\end{figure}

The following two Lemmas state that the resulting estimates are unbiased and bounded:
\begin{lemma} \label{lem:Unbiasedness1}
Let $\x\in\reals^d$, $\delta\geq 0$, and suppose that $f$ is $L$-Lipschitz, then the output of $\text{SGO}_G$ (Figure~\ref{fig:SGO_G}) is bounded by $L$ and is an unbiased estimate for $\nabla\fhat_\delta(\x)$. 
\end{lemma}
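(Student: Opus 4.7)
The plan is to verify the two claims separately: the boundedness is essentially immediate from the Lipschitz hypothesis, while the unbiasedness reduces to interchanging gradient and expectation, for which I would invoke dominated convergence (justified again by the Lipschitz bound on $\nabla f$).

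For boundedness, I would observe that an $L$-Lipschitz differentiable function satisfies $\|\nabla f(\y)\|\le L$ at every point $\y\in\reals^d$. Applied to $\y=\x+\delta\uu$, this yields $\|\nabla f(\x+\delta\uu)\|\le L$ deterministically, irrespective of the draw of $\uu\sim\B$. (If one is uncomfortable assuming differentiability, Rademacher's theorem gives differentiability almost everywhere, which suffices here since $\uu$ has a density.)

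For unbiasedness, the plan is to start from the definition
$$\hat f_\delta(\x)=\E_{\uu\sim\B}[f(\x+\delta\uu)]=\frac{1}{\vol(\B)}\int_{\B}f(\x+\delta\uu)\,d\uu$$
and differentiate under the integral sign with respect to $\x$. The difference quotient $\bigl(f(\x+h\be_i+\delta\uu)-f(\x+\delta\uu)\bigr)/h$ is uniformly bounded by $L$ in both $h$ and $\uu$ (by Lipschitzness), and it converges pointwise (a.e.\ in $\uu$) to $\partial_i f(\x+\delta\uu)$ as $h\to 0$; hence dominated convergence permits the swap, yielding
$$\nabla \hat f_\delta(\x)=\E_{\uu\sim\B}[\nabla f(\x+\delta\uu)].$$
The right-hand side is exactly the expected output of the oracle $\text{SGO}_G$, so the oracle is unbiased for $\nabla\hat f_\delta(\x)$.

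There is no real obstacle here beyond the minor measure-theoretic bookkeeping around the interchange of $\nabla$ and $\E$; once the uniform bound $\|\nabla f\|\le L$ is in hand, both claims fall out in a few lines. An alternative route, if one prefers to avoid any differentiability assumption on $f$, is to write $\hat f_\delta$ as a convolution of $f$ with a smooth compactly supported kernel (obtained by a change of variable that moves $\delta\uu$ into the argument of the kernel) and move the gradient onto the kernel; this gives the same identity without ever differentiating $f$ itself, but for a Lipschitz $f$ the dominated-convergence argument above is the most direct.
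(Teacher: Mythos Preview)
Your proposal is correct and follows essentially the same approach as the paper: boundedness from the Lipschitz bound on $\nabla f$, and unbiasedness by differentiating the defining expectation $\hat f_\delta(\x)=\E_{\uu\sim\B}[f(\x+\delta\uu)]$. The only difference is that the paper simply writes ``deriving both sides'' without justifying the interchange of $\nabla$ and $\E$, whereas you supply the dominated-convergence argument explicitly.
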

\begin{proof}
$\text{SGO}_G$  outputs $\nabla f(\x+\delta \uu) $ for some $\uu\in\B$, so  the first part is immediate by the Lipschitzness of $f$.
Now, by definition, $\fhat_{\delta}(\x) = \E_{\uu\sim \B}[f(\x+{\delta} \uu)]$, deriving both sides we get the second part of the Lemma.
\end{proof}

\begin{lemma} \label{lem:Unbiasedness2}
Let $\x\in\K\subseteq \reals^d$, $\delta\geq 0$, and suppose that $\max_\x |f(\x)|\leq C$, then the output of $\text{SGO}_V$ (Figure~\ref{fig:SGO_V})  is bounded by $\frac{dC}{\delta}$ and is an unbiased estimate for $\nabla\fhat_\delta(\x)$. 
\end{lemma}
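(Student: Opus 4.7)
The bound claim is immediate: since $\vv \in \Sp$ has $\|\vv\|=1$ and $|f(\x+\delta\vv)| \leq C$, we get
$\bigl\|\tfrac{d}{\delta} f(\x+\delta\vv)\vv\bigr\| \leq \tfrac{dC}{\delta}$. So the real content is unbiasedness.

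For the unbiasedness, the plan is to apply the divergence theorem (Stokes' theorem) to convert the gradient of an expectation over the ball into an expectation over its boundary sphere. Writing $\hat f_\delta(\x) = \frac{1}{\vol(\delta\B)}\int_{\delta\B} f(\x+\uu)\,d\uu$, a change of variables $\y = \x+\uu$ gives $\hat f_\delta(\x) = \frac{1}{\vol(\delta\B)}\int_{\B_\delta(\x)} f(\y)\,d\y$. Now I would differentiate under the integral sign, which using Stokes' theorem yields
\begin{equation*}
\nabla \hat f_\delta(\x) \;=\; \frac{1}{\vol(\delta\B)} \int_{\partial(\delta\B)} f(\x+\vv) \, \frac{\vv}{\|\vv\|}\, dS(\vv).
\end{equation*}

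Next I would convert this surface integral into the desired expectation over $\Sp$. Using $\vol(\delta\B) = \delta^d V_d$ (where $V_d$ is the unit ball volume), and that the surface area of $\delta\Sp$ is $d\,\delta^{d-1} V_d$, a rescaling $\vv = \delta\w$ with $\w \in \Sp$ gives $dS(\vv) = \delta^{d-1}dS(\w)$ and $\|\vv\|=\delta$, so
\begin{equation*}
\nabla \hat f_\delta(\x) \;=\; \frac{1}{\delta^d V_d}\int_{\Sp} f(\x+\delta \w)\,\w\,\delta^{d-1}\,dS(\w) \;=\; \frac{d}{\delta}\cdot\frac{1}{d V_d}\int_{\Sp} f(\x+\delta \w)\,\w\,dS(\w).
\end{equation*}
Since $\w\sim\Sp$ has uniform density $1/(dV_d)$ on the sphere, the right-hand side is exactly $\frac{d}{\delta}\E_{\w\sim\Sp}[f(\x+\delta \w)\w]$, which is the expectation of the oracle's output.

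The only subtlety I anticipate is the justification for moving the gradient inside the integral and the precise application of Stokes' theorem: this requires $f$ to be, say, continuous (which is fine since it is bounded), and one typically states it for the smoothed region $\B_\delta(\x)$. Once the identity $\nabla \hat f_\delta(\x) = \frac{1}{\vol(\delta\B)}\int_{\partial(\delta\B)} f(\x+\vv)\frac{\vv}{\|\vv\|}dS(\vv)$ is established, the rest is bookkeeping with volumes and surface areas of balls and spheres in $\reals^d$. I would cite this identity as a standard one-point gradient estimator identity (Flaxman--Kalai--McMahan) rather than redo the divergence-theorem derivation in full.
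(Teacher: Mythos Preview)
Your proposal is correct and follows essentially the same approach as the paper: both handle the bound trivially from $\|\vv\|=1$ and $|f|\le C$, and both obtain unbiasedness from the Stokes/divergence-theorem identity $\E_{\vv\sim\Sp}[f(\x+\delta\vv)\vv]=\tfrac{\delta}{d}\nabla\hat f_\delta(\x)$, citing Flaxman--Kalai--McMahan. You actually spell out more of the volume/surface-area bookkeeping than the paper does (it simply states the identity and cites \cite{Flaxman}); one small slip is that boundedness alone does not give continuity---the needed regularity comes from the ambient Lipschitz assumption on $f$ used throughout the paper.
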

\begin{proof}
$\text{SGO}_V$  outputs $\frac{d}{\delta}f(\x+\delta \vv) \vv $ for some $\vv\in\Sp$, since $f$ is $C$-Bounded over $\K$ the first part of the lemma is immediate.
In order to prove the second part, we can use Stokes theorem to show that if $\vv\sim \Sp$, then:
\begin{equation}
\label{eq:stokes_application}
\forall \x \in \reals^d \ . \ \E _{\vv  \sim \Sp} [f(\x + \delta \vv) \vv] = \frac{\delta}{d} \nabla\hat{f}_\delta (\x)
\end{equation}
A proof of Equation~\eqref{eq:stokes_application} is found in~\cite{Flaxman}.

\end{proof}
Note  that the oracles depicted in Figures~\ref{fig:SGO_G}, \ref{fig:SGO_V} may require sampling  function values outside $\K$, (specifically in $\K+\delta\B$).
 We assume that this is possible, and that the bounds over the function gradients/values inside $\K$, also apply in $\K+\delta\B$.
 
 \paragraph{Extensions to the  \emph{noisy} feedback settings:} 
 Note that for ease of notation, the oracles that appear in  Figures~\ref{fig:SGO_G}, \ref{fig:SGO_V}, assume we can access \emph{exact} gradients/values of $f$.
Given that we may only access \emph{noisy} and bounded gradient/value estimates of $f$ (Sec.~\ref{sec:setting}), we could use these instead of the exact ones that appear in Figures~\ref{fig:SGO_G},\ref{fig:SGO_V}, and still produce unbiased and bounded gradient estimates 
for the smoothed versions of $f$ as shown in Lemmas~\ref{lem:Unbiasedness1},\ref{lem:Unbiasedness2}.
 
 Particularly, in the case we may only access noisy gradients of $f$, then $\text{SGO}_G$ (Figure~\ref{fig:SGO_G}) will return $\nabla f(\x+\delta \uu)+\xi$ instead of 
$\nabla f(\x+\delta \uu)$, where $\xi$ is a noise term. Since we assume zero bias and bounded noise this implies that $\nabla f(\x+\delta \uu)+\xi$ is an unbiased estimate of $\nabla \fhat_\delta(\x)$, bounded by $L+K$ where $K$ is the bound on the noise and $L$ is the Lipschitz constant of $f$. 
We can show the same for $\text{SGO}_V$ (Figure~\ref{fig:SGO_V}), given a noisy value feedback.

\subsection{$\sigma$-Nice Functions}\label{section:sigmaNice}

\label{sec:SigmaNiceDef}
Following is our main definition 
\begin{definition}
\label{def:SigmaNiceDef}
 A function $f: \K \mapsto \reals$ is said to be
  $\sigma$-nice if the following two conditions hold:
\begin{enumerate}
\item  \textbf{Centering property:}  For every $\delta> 0 $, and every $\x^*_\delta \in \argmin_{\x\in\K} \fhat_{\delta}(\x)$, there exists $ \x^*_{\delta/2} \in \arg\min _{x\in\K}\fhat_{\delta/2}(\x)$, such that:
$$\|\x^*_\delta - \x^*_{\delta/2}\| \leq \frac{\delta}{2}$$ 
\item  \textbf{Local strong convexity of the smoothed function:} For every $\delta> 0 $ let $r_\delta=3\delta$, and denote $\x^*_\delta = \argmin_{\x\in\K} \fhat_{\delta}(\x)$,
  then over $B_{  r_\delta} (\x^*_\delta)$, the function $\fhat_{\delta}(\x)$
  is $\sigma$-strongly-convex.
\end{enumerate}
\end{definition}

Hence, $\sigma$-nice is a combination of two properties.
Both together  imply  that optimizing the smoothed version on a scale $\delta$ is a good start for optimizing a finer version on a scale of $\delta/2$, which is sufficient  for a scheme based on graduated optimization to work as we show next. 
In Section~\ref{section:experiments} we show that  $\sigma$-nice functions arise naturally in data.  An illustration of $\sigma$-nice function in $1$-dimension appears in Figure~\ref{fig:sigmaNice}.

 \begin{figure}[t]
 \centering
 \includegraphics[trim = 40mm 10mm 15mm 5mm, clip, width=0.35\textwidth ]{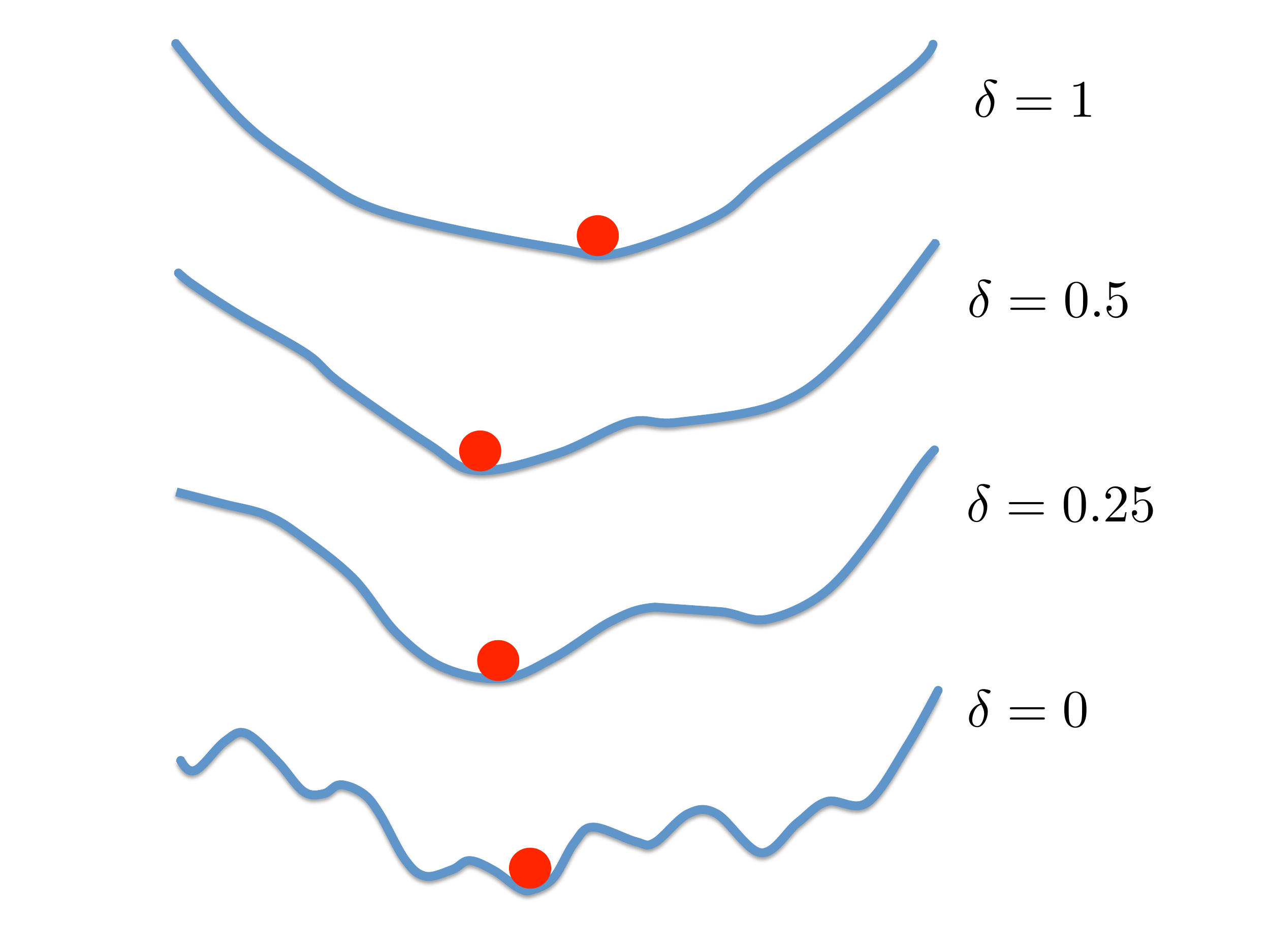}
 \caption{A $1$-dim $\sigma$-nice function ($\delta=0$), and its smoothed versions. } 
 \label{fig:sigmaNice}
 \end{figure}

\section{Graduated Optimization with a  Gradient Oracle}
\label{sec:SigmaNiceOptGrad}
In this section we assume that we can access a noisy gradient oracle for $f$. 

Thus, given $\x\in\reals^d, \delta\geq 0$ we can use $\text{SGO}_G$ (Figure~\ref{fig:SGO_G}) to obtain an unbiased and bounded estimate for $\nabla \fhat_\delta(\x)$, as ensured by Lemma~\ref{lem:Unbiasedness1}. 
Note that for ease of notation $\text{SGO}_G$ (Figure~\ref{fig:SGO_G}) is listed using an exact gradient oracle for $f$. As described at the end of Section~\ref{sec:ImplicitSmoothing}, this could be replaced with a noisy gradient oracle for $f$, and Lemma~\ref{lem:Unbiasedness1}, will still hold.

Following is our main Theorem:
\begin{theorem} \label{thm:Main}
Let $\eps \in (0,1)$ and $p\in(0,1/e)$, also let $\K$ be a convex set, and  $f$ be an $L$-Lipschitz $\sigma$-nice function.
Suppose that we apply Algorithm~\ref{alg:generic},  
then after  $\tilde{O}(1/\sigma^2\eps^2)$ rounds  Algorithm~\ref{alg:generic} outputs a point $\bar{\x}_{M+1}$ which is $\eps$ optimal with a probability greater than $1-p$.
\end{theorem}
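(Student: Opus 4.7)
The plan is to treat Algorithm~\ref{alg:generic} as a multi-stage scheme that, starting from some coarse scale $\delta_1$, halves the smoothing parameter at each stage $m=1,\ldots,M$ (so $\delta_m = \delta_1 / 2^{m-1}$) and runs $T_m$ steps of projected SGD on $\hat{f}_{\delta_m}$, using the unbiased bounded gradient estimate from Oracle~$\text{SGO}_G$ (Lemma~\ref{lem:Unbiasedness1}). The output of stage $m$ warm-starts stage $m{+}1$. I will pick $M = O(\log(\delta_1 L/\eps))$ so that the last smoothing scale satisfies $\delta_M \le \eps/(2L)$, and then invoke Lemma~\ref{lem:SmoothingLemma} to convert a near-minimizer of $\hat{f}_{\delta_M}$ into an $\eps$-optimal point of $f$ via the chain $f(\bar{\x}_{M+1}) - f(\x^*) \le 2\delta_M L + (\hat{f}_{\delta_M}(\bar{\x}_{M+1}) - \hat{f}_{\delta_M}(\x^*))$.

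The core of the proof is an inductive invariant: with high probability the stage-$m$ output $\bar{\x}_{m+1}$ satisfies $\|\bar{\x}_{m+1} - \x^*_{\delta_m}\| \le \delta_m/2$. By the centering property of $\sigma$-nice functions, this immediately gives $\|\bar{\x}_{m+1} - \x^*_{\delta_{m+1}}\| \le \delta_m/2 + \delta_m/2 = 2\delta_{m+1} < r_{\delta_{m+1}} = 3\delta_{m+1}$, so the warm-start lies inside the local strong-convexity ball $B_{r_{\delta_{m+1}}}(\x^*_{\delta_{m+1}})$ on which $\hat{f}_{\delta_{m+1}}$ is $\sigma$-strongly convex. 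Running SGD with projection onto a suitably chosen ball centered at the current warm-start keeps all iterates inside this region, which is exactly what is needed to invoke the strongly convex SGD analysis on $\hat{f}_{\delta_{m+1}}$. Propagating the invariant requires that after $T_{m+1}$ SGD steps the distance to $\x^*_{\delta_{m+1}}$ shrinks from $2\delta_{m+1}$ down to $\delta_{m+1}/2$.

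For the stage analysis, I would apply a standard high-probability version of SGD for strongly convex functions: using the unbiased estimator bounded by $L + K$ from Lemma~\ref{lem:Unbiasedness1}, step size $\eta_t = 1/(\sigma t)$ and averaging, the suffix-averaged iterate achieves expected excess loss $\tilde{O}((L+K)^2/(\sigma T_{m+1}))$, and an Azuma/Freedman-type concentration bound upgrades this to a $\log(1/p')$-factor high-probability bound. Converting excess loss to squared distance via inequality~\eqref{eq:strongCvxity} and forcing the distance to be $\le \delta_{m+1}/2$ with probability $1 - p/M$ yields $T_{m+1} = \tilde{O}\!\left((L+K)^2 \log(M/p) / (\sigma^2 \delta_{m+1}^2)\right)$.

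Finally I would sum the per-stage budgets: since $\sum_{m=1}^{M} \delta_m^{-2}$ is a geometric series dominated by $\delta_M^{-2} = O(L^2/\eps^2)$, the total number of SGD steps is $\tilde{O}(1/(\sigma^2 \eps^2))$, matching the theorem. A union bound over the $M$ stages (each succeeding with probability $\ge 1 - p/M$) gives overall success probability $\ge 1 - p$, and the conversion via Lemma~\ref{lem:SmoothingLemma} closes the argument. The main obstacle I foresee is not the combinatorics of the induction but the delicate choice of the projection set used by the inner SGD: it must be large enough to contain $\x^*_{\delta_m}$ given only the (inexact) warm-start, yet small enough that every iterate remains in the strongly convex ball $B_{r_{\delta_m}}(\x^*_{\delta_m})$. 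A secondary subtlety is obtaining the high-probability (rather than in-expectation) guarantee per stage, which I expect to handle via the standard martingale-concentration treatment of strongly convex SGD.
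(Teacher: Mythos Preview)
Your proposal is correct and follows essentially the same route as the paper: an induction over stages using the centering property and local strong convexity, per-stage high-probability SGD bounds (the paper invokes the suffix-averaging result of Rakhlin--Shamir--Sridharan rather than reproving it), a geometric sum over $\delta_m^{-2}$, a union bound over the $M$ stages, and the final conversion via Lemma~\ref{lem:SmoothingLemma}. The only adjustment you will need is the one you already anticipate: with your stated invariant $\|\bar{\x}_{m+1}-\x^*_{\delta_m}\|\le \delta_m/2$ the projection ball cannot simultaneously contain $\x^*_{\delta_{m+1}}$ and stay inside $B_{3\delta_{m+1}}(\x^*_{\delta_{m+1}})$; the paper fixes this by tightening the invariant to $\|\bar{\x}_{m+1}-\x^*_{\delta_m}\|\le \delta_{m+1}/2$ (i.e., driving the excess loss down to $\sigma\delta_{m+1}^2/8$) and taking $\K_{m+1}=\K\cap B(\bar{\x}_{m+1},1.5\,\delta_{m+1})$, which exactly closes the loop.
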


\begin{algorithm}[t]
\caption{$\text{GradOpt}_G$  }
    \begin{algorithmic}
    \STATE \textbf{Input}:  target error $\eps$, maximal failure probability $p$, decision set $\K$
    \STATE  Choose $\bar{\x}_1 \in \K $ uniformly at random. 
     \STATE  Set $\delta_1 =  \textrm{diam}(\K)/2$, $\tilde{p} = p/M$, and $M= \log_2 \frac{1}{ \alpha_0\eps}$ where  
     $\alpha_0 = \min\{\frac{1}{2L\textrm{diam}(\K)},\frac{2\sqrt{2}}{\sqrt{\sigma} \textrm{diam}(\K)} \}$
    	\FOR { $m=1$ to $M$} 
	\STATE // Perform SGD over $\fhat_{\delta_m}$
	\STATE Set $\eps_m: = \sigma \delta_m^2/32$, and 
	$$T_F = \frac{12480L^2}{\sigma \eps_m}\log\big(\frac{2}{\tilde{p}}+2\log\frac{12480L^2}{\sigma\eps_m}  \big) $$
	\STATE Set shrinked decision set,
	$$\K_{m} : = \K\cap B(\bar{x}_m,1.5\delta_m)$$ 
	\STATE Set gradient oracle for $\fhat_{\delta_m}$,
	 $$\text{GradOracle}(\cdot) =\text{SGO}_G(\cdot,\delta_m)$$
	\STATE Update:
	 $$\bar{\x}_{m+1} \gets \text{Suffix-SGD}(T_F ,\K_m, \bar{\x}_{m},\text{GradOracle} )$$
            \STATE $\delta_{m+1} = \delta_m/2$
            \ENDFOR
            \STATE \textbf{Return}: $\bar{\x}_{M+1}$
    \end{algorithmic}
   \label{alg:generic}
   \end{algorithm}

\begin{algorithm}[h!]
\caption{Suffix-SGD  } 
    \begin{algorithmic}
    \STATE  \textbf{Input}: total time $T_F$, decision set $\K$,  initial point $\x_1 \in \K$, gradient oracle $\text{GradOracle}(\cdot)$
    	\FOR { $t=1$ to $T_F$} 
	     \STATE Set $\eta_t = 1/\sigma t$ 
	      \STATE Query the gradient oracle at $\x_t$: 
	                   $$g_t \gets \text{GradOracle}(\x_t) $$
	      \STATE Update: 
	              $\x_{t+1} \gets \Pi_{\K}(\x_t-\eta_t g_t) $
	\ENDFOR
     \STATE \textbf{Return}: $\bar{\x}_{T_F}: = \frac{2}{ T_F}\big(\x_{ T_F/2+1}+\ldots+\x_{T_F} \big)$
    \end{algorithmic}
    \label{Alg:SuffixGD}
   \end{algorithm}

Algorithm~\ref{alg:generic} is divided into epochs, at  epoch $m$ it uses  $\text{SGO}_G$ to obtain unbiased estimates for the gradients of $\fhat_{\delta_m}$ which are then employed by $\text{Suffix-SGD}$ (Algorithm~\ref{Alg:SuffixGD}), to optimize this smoothed version. This optimization over $\fhat_{\delta_m}$ is performed until we are ensured to reach a point close enough to $\x^*_{m+1}:=\argmin_{\x\in\K} \fhat_{\delta_{m+1}}(\x)$, i.e., the minimum of $\fhat_{\delta_{m+1}}$.
Also note that at epoch $m$ the optimization over $\fhat_{\delta_m}$ is initialized at $\bar{\x}_{m}$ which is the point reached at the previous epoch.

$\text{Suffix-SGD}$ (Algorithm~\ref{Alg:SuffixGD}),  is a stochastic optimization algorithm for 
strongly convex functions. Its guarantees are presented in Section~\ref{sec:AnalysisGrad}.

\subsection{Analysis}
\label{sec:AnalysisGrad}
Let us first discuss $\text{Suffix-SGD}$ (Algorithm~\ref{Alg:SuffixGD}).  This algorithm performs projected gradient descent using the gradients received from  $\text{GradOracle}(\cdot)$. The  projection operator
$\Pi_\K$, is defined $\forall \y\in\reals^d$ as $$\Pi_\K(\y):=\argmin_{\x\in\K}\|\x-\y\|~.$$

Now consider  a $\sigma$-strongly convex function $F:\K \to \reals$, and suppose that we have an oracle, $\text{GradOracle}(\cdot)$, that upon querying 
a point $\x\in \K$ returns an unbiased and bounded gradient estimate, $g$, i.e.,  $\E[g] = \nabla F(x)$, and  $\|g\|\leq G$.
Note the following result from \cite{rakhlin2011making} regarding stochastic optimization of $\sigma$-strongly-convex functions, given such an oracle:

\begin{theorem} \label{thm:Shamir}
Let $p\in (0,1/e)$, and $F$ be a $\sigma$-strongly convex function. Suppose that $\text{GradOracle}(\cdot)$ produces G-bounded, and unbiased estimates of $\nabla F$.
Then after no more than $T_F$ rounds, the final point $ \bar{\x}_{T_F}$ returned by Suffix-SGD (Algorithm~\ref{Alg:SuffixGD} ) ensures that with a probability $\geq 1-p$, we have:
 $$F(\bar{\x}_{T_F}) - \min_{\x\in \K} F(\x) \leq \frac{6240\log\big(2\log(T_F)/p\big)G^2}{\sigma T_F}  $$
\end{theorem}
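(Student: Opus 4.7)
The plan is to follow the suffix-averaging analysis of \cite{rakhlin2011making}. Three ingredients are needed: (i) a one-step recursion for the squared distance $D_t := \|\x_t - \x^*\|^2$ to $\x^* = \arg\min_{\x \in \K} F(\x)$; (ii) an in-expectation $1/t$ decay of $\E[D_t]$, lifted to a high-probability bound via a Freedman-type martingale concentration; and (iii) conversion of the pointwise iterate bounds into a function-value bound for the suffix average through convexity of $F$.

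First I would expand the SGD update, using nonexpansiveness of $\Pi_\K$ and decomposing $g_t = \nabla F(\x_t) + (g_t - \nabla F(\x_t))$. Combined with the $\sigma$-strong convexity inequality $\langle \nabla F(\x_t), \x_t - \x^*\rangle \ge F(\x_t) - F^* + \frac{\sigma}{2} D_t$ and the assumption $\|g_t\| \le G$, this yields
$$D_{t+1} \le (1 - \sigma\eta_t)\,D_t - 2\eta_t\bigl(F(\x_t) - F^*\bigr) + \eta_t^2 G^2 + \zeta_t,$$
where $\zeta_t = -2\eta_t\,\langle g_t - \nabla F(\x_t),\, \x_t - \x^*\rangle$ is a bounded martingale difference with respect to the natural filtration. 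Plugging in $\eta_t = 1/(\sigma t)$, the prefactor becomes $(t-1)/t$, so taking expectations (to kill $\zeta_t$) and unrolling by induction on $t$ gives $\E[D_t] \le c_0\, G^2/(\sigma^2 t)$ for an absolute constant~$c_0$.

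Next I would promote this to a high-probability statement. The cumulative martingale $S_t = \sum_{s \le t} \zeta_s$ has per-step conditional variance bounded by $O(\eta_s^2 G^2 D_s)$ and increments bounded by $O(\eta_s G \sqrt{D_s})$, so Freedman's inequality controls the deviation of $S_t$ in terms of the very quantity $D_s$ that we are trying to bound. The main obstacle is exactly this self-reference, and it is resolved by a peeling argument: one posits that $D_s \le \alpha\, G^2 \log(\log s/p)/(\sigma^2 s)$ holds simultaneously for all $s < t$ with high probability, plugs this inductive hypothesis into the Freedman variance proxy, and concludes that the same inequality holds at time $t$ with a slightly enlarged failure probability. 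A union bound over the $O(\log T_F)$ dyadic ranges converts the per-step bound into a uniform statement on $\{T_F/2+1,\ldots,T_F\}$, and is where the $\log\log T_F / p$ factor in the final bound enters (the restriction $p < 1/e$ is used here so that the $\log\log$ term is well behaved).

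Finally I would convert the iterate bounds into a function-value bound for $\bar{\x}_{T_F}$. Solving the one-step recursion for $2\eta_t(F(\x_t)-F^*)$, dividing by $2\eta_t = 2/(\sigma t)$, and telescoping from $t = T_F/2+1$ to $T_F$ cancels the coefficients $\sigma(t-1)/2$ and $-\sigma t/2$ multiplying $D_t$ and $D_{t+1}$, leaving
$$\sum_{t=T_F/2+1}^{T_F}\bigl(F(\x_t) - F^*\bigr) \le \tfrac{\sigma T_F}{4}\, D_{T_F/2+1} + \tfrac{G^2 \ln 2}{2\sigma} + \tfrac{\sigma}{2}\!\!\sum_{t=T_F/2+1}^{T_F}\!\!t\,\zeta_t.$$
Plugging in the high-probability bound on $D_{T_F/2+1}$ from the previous step, and applying one additional Freedman bound to the final martingale sum (whose variance is again controlled by the same uniform bound on $D_t$), the right-hand side is $O(G^2 \log(\log T_F/p))$ with probability $\ge 1-p$. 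Dividing by $T_F/2$ and invoking Jensen's inequality for the convex $F$ yields $F(\bar{\x}_{T_F}) - F^* \le O(G^2 \log(\log T_F/p)/(\sigma T_F))$, and carefully tracking the numerical constants through the peeling and Freedman steps produces the explicit constant $6240$ stated in the theorem.
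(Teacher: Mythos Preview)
The paper does not prove this theorem at all: it is quoted verbatim as a known result from \cite{rakhlin2011making}, with no proof or proof sketch given. So there is nothing in the paper to compare your argument against.

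That said, your proposal is a faithful outline of the original Rakhlin--Shamir--Sridharan argument: the one-step SGD recursion under strong convexity, the in-expectation $O(1/t)$ decay of $\|\x_t-\x^*\|^2$, the bootstrapping/peeling use of Freedman's inequality to upgrade this to a high-probability uniform bound (which is indeed where the $\log(\log T_F/p)$ factor and the restriction $p<1/e$ come from), and finally the telescoping over the second half of the iterates combined with Jensen to control $F(\bar\x_{T_F})-F^*$. This is exactly the route taken in the cited reference, so if you were asked to supply the missing proof your plan is the right one; just be aware that in the context of the present paper the theorem is treated as a black box.
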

\begin{corollary}
\label{cor:Shamir}
The latter means that for $T_F \geq \frac{12480G^2}{\sigma \eps}\log\big(2/p+2\log(12480G^2/\sigma\eps)  \big)$ we will have an excess loss smaller than $\eps$.
\end{corollary}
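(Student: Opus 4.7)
The plan is to invert Theorem~\ref{thm:Shamir}'s bound. Requiring that the right-hand side $\frac{6240 G^2\log(2\log T_F/p)}{\sigma T_F}$ be at most $\eps$ rearranges to the transcendental inequality
\[
T_F \;\ge\; \frac{6240 G^2}{\sigma\eps}\,\log\!\Bigl(\frac{2\log T_F}{p}\Bigr),
\]
in which $T_F$ appears on both sides. The task therefore reduces to verifying that the closed-form value of $T_F$ proposed in the corollary satisfies this implicit inequality.

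To handle the transcendental dependence, I would apply the standard trick for bounds of the form $T \gtrsim c\log T$: take the leading prefactor to be twice the naive one --- hence $12480$ rather than $6240$ --- and evaluate the inner log at a crude but closed-form upper bound on $T_F$ itself. Writing $A := 12480 G^2/(\sigma\eps)$, the claimed $T_F = A\log(2/p+2\log A)$ is of exactly this form, with the factor-of-two slack present to absorb the $\log\log$ overhead that arises when the upper bound on $T_F$ is substituted back into the inner log.

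The verification then reduces to three short estimates: (i) $\log x \le x$ gives $T_F \le A(2/p+2\log A)$; (ii) taking logs and using that $p\in(0,1/e)$ from the hypothesis of Theorem~\ref{thm:Shamir} keeps $2/p+2\log A$ comfortably large, yielding $\log T_F \le 2/p + 2\log A$; (iii) a further application of $\log$, together with $2/p \le 2/p+2\log A$, produces $\log(2\log T_F/p) \le 2\log(2/p+2\log A)$. Substituting into the implicit inequality gives $\frac{6240 G^2}{\sigma\eps}\log(2\log T_F/p) \le A\log(2/p+2\log A) = T_F$, as required. The only obstacle is bookkeeping --- chasing the nested logarithms carefully --- and the stipulation $p<1/e$ is precisely what guarantees that the doubling factor $2$ in front of $6240$ is enough to cover the $\log\log$ slack.
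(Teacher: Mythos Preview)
The paper gives no proof of this corollary at all --- it is stated as an immediate consequence of Theorem~\ref{thm:Shamir} (``The latter means that \ldots'') and then used. Your proposal correctly fills in the inversion that the paper leaves implicit: setting $A=12480G^2/(\sigma\eps)$, bounding $\log T_F\le 2/p+2\log A$ via $\log x\le x$ and the assumption $p<1/e$, and then folding this back into $\log(2\log T_F/p)\le 2\log(2/p+2\log A)$ so that the factor~$2$ in front of $6240$ absorbs the slack. This is exactly the standard verification the paper is relying on, so there is nothing to compare; your argument is just a more explicit version of what the authors omit.
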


Notice that at each epoch $m$ of $\text{GradOpt}_G$, it initiates $\text{Suffix-SGD}$ with a gradient oracle  $\text{SGO}_G(\cdot,\delta_m)$. According to Lemma~\ref{lem:Unbiasedness1}, $\text{SGO}_G(\cdot,\delta_m)$ produces an unbiased and $L$-bounded estimates of $\fhat_{\delta_m}$, thus in the analysis of each epoch we can use 
Theorem~\ref{thm:Shamir} for $\fhat_{\delta_m}$, taking $G=L$.

Following is our key Lemma:
\begin{lemma} \label{lem:induction}
Consider $M$, $\K_m$ and  $\bar{\x}_{m+1}$ as defined in Algorithm~\ref{alg:generic}. Also denote by $\x^*_m$ the minimizer of $\fhat_{\delta_m}$ in $\K$. Then the following  holds for all  $1\leq m \leq M$ w.p.$\geq 1-p$:
\begin{enumerate}
\item The smoothed version $\fhat_{\delta_m} $is $\sigma$-strongly convex over $\K_m$, and $\x_{m}^* \in \K_m$.
\item Also,  $\fhat_{\delta_m}(\bar{\x}_{m+1}) - \fhat_{\delta_m}(\x_m^*) \leq \sigma \delta_{m+1}^2/8$
\end{enumerate}
\end{lemma}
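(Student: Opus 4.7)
The proof proceeds by induction on the epoch index $m$, with the two parts of the lemma feeding into each other. Specifically, at epoch $m$ we use Part 1 (strong convexity of $\fhat_{\delta_m}$ on $\K_m$ and $\x_m^*\in\K_m$) as the hypothesis that lets us invoke the Suffix-SGD guarantee, producing Part 2 for epoch $m$; then we use Part 2 at epoch $m$, together with the two defining properties of a $\sigma$-nice function, to promote the conclusion to Part 1 at epoch $m+1$. The failure probabilities of the $M$ calls to Suffix-SGD are controlled by a union bound with per-epoch budget $\tilde p=p/M$.

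\textbf{From Part 1 at $m$ to Part 2 at $m$.} Given Part 1, $\x_m^*\in\K_m\subseteq\K$ is still a global minimizer of $\fhat_{\delta_m}$ on $\K$, so it is \emph{a fortiori} the minimizer of the same (strongly convex) function restricted to $\K_m$. By Lemma~\ref{lem:Unbiasedness1}, the oracle $\text{SGO}_G(\cdot,\delta_m)$ returns unbiased, $L$-bounded estimates of $\nabla\fhat_{\delta_m}$, so I would apply Corollary~\ref{cor:Shamir} with $G=L$, target accuracy $\eps_m=\sigma\delta_m^2/32=\sigma\delta_{m+1}^2/8$, and failure probability $\tilde p$. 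The prescribed $T_F$ in Algorithm~\ref{alg:generic} matches the corollary's threshold, so with probability at least $1-\tilde p$,
\begin{equation*}
\fhat_{\delta_m}(\bar\x_{m+1})-\fhat_{\delta_m}(\x_m^*)\;\le\;\eps_m\;=\;\sigma\delta_{m+1}^2/8,
\end{equation*}
which is exactly Part 2.

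\textbf{From Part 2 at $m$ to Part 1 at $m+1$.} Using strong convexity of $\fhat_{\delta_m}$ via inequality~\eqref{eq:strongCvxity}, Part 2 yields $\|\bar\x_{m+1}-\x_m^*\|\le \delta_{m+1}/2$. The centering property of the $\sigma$-nice function, applied at scale $\delta_m$ (so $\delta_m/2=\delta_{m+1}$), furnishes a minimizer $\x_{m+1}^*$ of $\fhat_{\delta_{m+1}}$ with $\|\x_m^*-\x_{m+1}^*\|\le \delta_{m+1}$. The triangle inequality then gives $\|\bar\x_{m+1}-\x_{m+1}^*\|\le 3\delta_{m+1}/2$, which places $\x_{m+1}^*$ inside $\K_{m+1}=\K\cap B(\bar\x_{m+1},1.5\delta_{m+1})$. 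For any $\y\in\K_{m+1}$, a second triangle inequality gives $\|\y-\x_{m+1}^*\|\le 3\delta_{m+1}=r_{\delta_{m+1}}$, so $\K_{m+1}\subseteq B(\x_{m+1}^*,r_{\delta_{m+1}})$; the local strong convexity clause of $\sigma$-niceness then delivers Part 1 at epoch $m+1$.

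\textbf{Base case and union bound.} The base case $m=1$ is almost free because $\delta_1=\mathrm{diam}(\K)/2$ makes $r_{\delta_1}=3\delta_1=\tfrac{3}{2}\mathrm{diam}(\K)$ large enough that $\K\subseteq B(\x_1^*,r_{\delta_1})$, so $\fhat_{\delta_1}$ is $\sigma$-strongly convex over all of $\K$ and hence over $\K_1$, with $\x_1^*$ in reach. Summing the $\tilde p=p/M$ failure probabilities across the $M$ epochs via a union bound gives overall success probability at least $1-p$. The main delicate step is the inductive transition: one must match the constants $1.5\delta_m$ (ball radius of $\K_m$), $\delta_m/2$ (centering shift), and $r_\delta=3\delta$ (strong-convexity radius) so that the triangle-inequality chain $\tfrac12\delta_{m+1}+\delta_{m+1}=\tfrac32\delta_{m+1}$ and $\tfrac32\delta_{m+1}+\tfrac32\delta_{m+1}=3\delta_{m+1}$ closes exactly; this is also why the target accuracy is taken to be $\sigma\delta_{m+1}^2/8$ rather than anything larger.
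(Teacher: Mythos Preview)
Your proposal is correct and follows essentially the same inductive argument as the paper: use Part~1 and Corollary~\ref{cor:Shamir} (with $G=L$ and per-epoch failure budget $\tilde p=p/M$) to get Part~2, then combine the strong-convexity inequality~\eqref{eq:strongCvxity} with the centering property to obtain $\|\bar\x_{m+1}-\x_{m+1}^*\|\le 1.5\delta_{m+1}$ and thus $\K_{m+1}\subseteq B(\x_{m+1}^*,3\delta_{m+1})$, closing the induction. The only cosmetic difference is in the base case: the paper asserts $\K_1=\K$ directly, whereas you argue via $r_{\delta_1}=3\delta_1\ge\mathrm{diam}(\K)$ so that $\K\subseteq B(\x_1^*,r_{\delta_1})$; both routes yield Part~1 at $m=1$.
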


\begin{proof}
We will prove by induction, let us prove it holds for $m=1$. Note that $\delta_1 =  \textrm{diam}(\K)/2$, therefore $\K_1=\K$, and also $\x_1^*\in\K_1$.
 Also recall that $\sigma$-niceness of $f$ implies that $\fhat_{\delta_1}$ is $\sigma$-strongly convex in $\K$, thus
by Corollary~\ref{cor:Shamir}, after less than $T_F = \tilde{\mathcal{O}}(\frac{12480L^2}{\sigma (\sigma \delta_1^2/32)}) $ optimization steps of Suffix-SGD with a probability greater than $1-p/M$, we will have:
$$\fhat_{\delta_1}(\bar{\x}_2) - \fhat_{\delta_1}(\x_1^*) \leq \sigma \delta_1^2/32= \sigma \delta_{2}^2/8 $$
which establishes the case of $m=1$.
Now assume that lemma holds for $m>1$. By this assumption, $\fhat_{\delta_m}(\bar{\x}_{m+1}) - \fhat_{\delta_m}(\x_m^*) \leq \sigma \delta_{m+1}^2/8$,  $\fhat_{\delta_m}$ is $\sigma$-strongly convex in $\K_m$, and also  $\x_m^* \in \K_m$. Hence, we can use  Equation~\eqref{eq:strongCvxity}  to get:
$$\|\bar{\x}_{m+1} - \x_m^* \|\leq  \sqrt{\frac{2}{\sigma}} \sqrt{\fhat_{\delta_m}(\bar{\x}_{m+1}) - \fhat_{\delta_m}(\x_m^*)} = \frac{\delta_{m+1}}{2}$$
Combining the latter with the centering property of $\sigma$-niceness yields:
\begin{align*}
\|\bar{\x}_{m+1} - \x_{m+1}^* \|&\leq \|\bar{\x}_{m+1} - \x_m^* \|+\| \x_m^* -\x_{m+1}^*\| \\
& \leq 1.5\delta_{m+1} 
\end{align*}
and it follows that,
$$\x_{m+1}^* \in B(\bar{\x}_{m+1},1.5\delta_{m+1})\subset B({\x}_{m+1}^*,3\delta_{m+1})$$
 Recalling that $\K_{m+1}: = B(\bar{\x}_{m+1},1.5\delta_{m+1})$, and the local strong convexity property of $f$ (which is $\sigma$-nice), then the induction step for first part of the lemma holds. Now, by Corollary~\ref{cor:Shamir}, after less than $T_F = \tilde{\mathcal{O}}(\frac{12480L^2}{\sigma (\sigma \delta_{m+1}^2/32)}) $ optimization steps of Suffix-SGD over $\fhat_{\delta_{m+1}}$, we will have:
$$\fhat_{\delta_{m+1}}(\bar{\x}_{m+2}) - \fhat_{\delta_{m+1}}(\x_{m+1}^*) \leq \sigma \delta_{m+2}^2/8$$
which establishes the induction step for the second part of the lemma.

An analysis of fail probability: since we have $M$ epochs in total and at each epoch the fail probability is smaller than $p/M$, then the total fail probability of our algorithm is smaller than $p$.
\end{proof}
We are now ready to prove Theorem~\ref{thm:Main}:
\begin{proof}[proof of Theorem~\ref{thm:Main}]
Algorithm~\ref{alg:generic} terminates after $M = \log_2 \frac{1}{\alpha_0 \eps}$ epochs meaning, $\delta_{M} =  \textrm{diam}(\K)\alpha_0 \eps/2$.  According to Lemma~\ref{lem:induction} the following holds w.p.$\geq 1-p$~, for every $\x\in \K$,
\begin{align*}
\fhat_{\delta_{M}}(\bar{\x}_{M+1}) - \fhat_{\delta_{M}}(\x)& \leq \sigma \delta_{M+1}^2/8 \\
&=\left( \frac{\sqrt{\sigma} \textrm{diam}(\K)\alpha_0\eps}{8\sqrt{2} }\right)^2 
\end{align*} 
Due to Lemma~\ref{lem:SmoothingLemma}, $\fhat_{\delta_{M}}$ is $L\delta_{M}$ biased from $f$,  thus for every $\x\in \K$,
\begin{align*}
f(\bar{\x}_{M+1}) - f(\x) 
&\leq L\textrm{diam}(\K)\alpha_0 \eps+\left( \frac{\sqrt{\sigma} \textrm{diam}(\K)\alpha_0\eps}{8\sqrt{2} }\right)^2  \\
&\leq \eps
\end{align*}
we used $\alpha_0 = \min\{\frac{1}{2L\textrm{diam}(\K)},\frac{2\sqrt{2}}{\sqrt{\sigma} \textrm{diam}(\K)} \}$, and  $\eps< 1$.

Let $T_{\text{total}}$, be the total number of queries made  by Algorithm~\ref{alg:generic}, then we have:
\begin{align*}
T_{\text{total}} &\leq  \sum_{m=1}^{M} \frac{12480L^2}{\sigma \eps_m}\log\Gamma \\
&\leq \sum_{m=1}^{M} \frac{12480L^2}{\sigma  (\sigma \delta_m^2/32)}\log\Gamma \\
&\leq \frac{4\cdot 10^5 L^2\log\Gamma }{\sigma^2}  \sum_{i=1}^{M} \frac{4^{i-1}}{\delta_1^2 }\\
&\leq  \frac{14 \cdot 10^4 L^2\log\Gamma }{\sigma^2}  \frac{4^{M}}{\delta_1^2 } \\
& \leq   \frac{14 \cdot 10^4 L^2\log\Gamma }{\sigma^2}  \max\{ {16L^2},{\sigma}/{2} \} \frac{1}{\eps^2} 
\end{align*}
here we used the notation:
\begin{align*}
\Gamma&: = \frac{2M}{p}+2\log(12480L^2/\sigma\eps_M) \\
&\leq  \frac{2M}{p}+2\log(4 \cdot 10^5 L^2\max\{ {16L^2},\frac{\sigma}{2} \}/\sigma^2 \eps^2) 
\end{align*}
\end{proof}

\section{Graduated Optimization  with  a Value Oracle}
\label{sec:SigmaNiceOptVal}
In this section we assume that we can access a noisy value oracle for $f$. 

Thus, given $\x\in\reals^d, \delta\geq0$ we can use $\text{SGO}_V$ (Figure~\ref{fig:SGO_V}) as an oracle that outputs an unbiased and bounded estimates for $\nabla \fhat_\delta(\x)$, as ensured by Lemma~\ref{lem:Unbiasedness2}. 
Note that for ease of notation $\text{SGO}_V$ (Figure~\ref{fig:SGO_V}) is listed using an exact value oracle for $f$. As described at the end of Section~\ref{sec:ImplicitSmoothing}, this could be replaced with a noisy value oracle for $f$, and Lemma~\ref{lem:Unbiasedness2}, will still hold.

Following is our main Theorem:
\begin{theorem} \label{thm:Main2}
Let $\eps>0$ and $p\in(0,1/e)$, also let $\K$ be a convex set, and  $f$ be an $L$-Lipschitz $\sigma$-nice function. Assume also that $\max_\x |f(\x)|\leq C$. 
Suppose that we apply Algorithm~\ref{alg:generic2},  
then after after $\tilde{O}(d^2/\sigma^2\eps^4)$ rounds  Algorithm~\ref{alg:generic2} outputs a point $\bar{\x}_{M+1}$ which is $\eps$ optimal with a probability greater than $1-p$.
\end{theorem}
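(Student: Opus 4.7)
The plan is to mirror the proof of Theorem~\ref{thm:Main} nearly step-for-step, substituting the value-based oracle $\text{SGO}_V$ for $\text{SGO}_G$ throughout, and tracking the one quantitative change this introduces. Algorithm~\ref{alg:generic2} should have exactly the same epoch structure as Algorithm~\ref{alg:generic}: shrink $\delta_m = \delta_1 / 2^{m-1}$, restrict to $\K_m = \K \cap B(\bar{\x}_m, 1.5\delta_m)$, and run Suffix-SGD on $\fhat_{\delta_m}$ starting from $\bar{\x}_m$, only now the gradient oracle handed to Suffix-SGD is $\text{SGO}_V(\cdot, \delta_m)$ rather than $\text{SGO}_G(\cdot, \delta_m)$. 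The per-epoch budget $T_F$ must be enlarged to reflect the new norm bound on gradient estimates.

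First I would invoke Lemma~\ref{lem:Unbiasedness2} to note that at epoch $m$, $\text{SGO}_V(\cdot, \delta_m)$ produces unbiased estimates of $\nabla \fhat_{\delta_m}$ with norm bounded by $G_m := dC/\delta_m$. Applying Corollary~\ref{cor:Shamir} to $\fhat_{\delta_m}$ (which is $\sigma$-strongly convex on $\K_m$ by the inductive invariant), to drive the excess loss on $\fhat_{\delta_m}$ below $\sigma \delta_{m+1}^2/8 = \sigma\delta_m^2/32$ it suffices to take
\[
T_m \;=\; \tilde{\mathcal{O}}\!\left(\frac{G_m^2}{\sigma \cdot \sigma \delta_m^2/32}\right) \;=\; \tilde{\mathcal{O}}\!\left(\frac{d^2 C^2}{\sigma^2 \delta_m^4}\right)
\]
Suffix-SGD steps, succeeding with probability at least $1 - p/M$.

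Next I would re-prove the analog of Lemma~\ref{lem:induction} by induction on $m$. The base case uses $\K_1 = \K$ and the $\sigma$-strong convexity of $\fhat_{\delta_1}$ on $\K$ from $\sigma$-niceness. For the inductive step, the per-epoch guarantee above combined with Equation~\eqref{eq:strongCvxity} gives $\|\bar{\x}_{m+1} - \x_m^*\| \leq \delta_{m+1}/2$; the centering property of $\sigma$-niceness supplies $\|\x_m^* - \x_{m+1}^*\| \leq \delta_m/2 = \delta_{m+1}$; the triangle inequality then gives $\|\bar{\x}_{m+1} - \x_{m+1}^*\| \leq 1.5\delta_{m+1}$, so $\x_{m+1}^* \in \K_{m+1} \subseteq B(\x_{m+1}^*, 3\delta_{m+1})$, and the local strong convexity clause of $\sigma$-niceness certifies that $\fhat_{\delta_{m+1}}$ is $\sigma$-strongly convex on $\K_{m+1}$. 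A union bound over the $M$ epochs keeps the total failure probability below $p$. The final excess loss on $f$ is then handled exactly as in Theorem~\ref{thm:Main}: the Suffix-SGD guarantee on $\fhat_{\delta_M}$ plus the $L \delta_M$ bias from Lemma~\ref{lem:SmoothingLemma}, with the same choice of $\alpha_0$ and $M = \log_2(1/\alpha_0 \eps)$, yields $f(\bar{\x}_{M+1}) - \min_\K f \leq \eps$.

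Finally I would sum the per-epoch work. Since $T_m = \tilde{\mathcal{O}}(d^2 C^2 / \sigma^2 \delta_m^4)$ and $\delta_m$ halves each epoch, the sum is a geometric series dominated by its last term with $\delta_M = \Theta(\eps)$, giving
\[
T_{\text{total}} \;=\; \sum_{m=1}^{M} T_m \;=\; \tilde{\mathcal{O}}\!\left(\frac{d^2 C^2}{\sigma^2 \delta_M^4}\right) \;=\; \tilde{\mathcal{O}}\!\left(\frac{d^2}{\sigma^2 \eps^4}\right),
\]
as claimed. The main obstacle, compared to the gradient-feedback proof, is precisely the $1/\delta$ blow-up in the gradient-estimate bound of $\text{SGO}_V$: it is this factor, squared and then accumulated in the geometric sum, that upgrades the $1/\sigma^2\eps^2$ rate of Theorem~\ref{thm:Main} to $d^2/\sigma^2\eps^4$. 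The rest of the argument (centering, local strong convexity, union bound, smoothing bias) is structurally identical to the gradient case, so the real care is only in redefining $T_F$ per epoch and checking that the induction's contraction rate $\delta_{m+1}/2$ is still tight enough for the centering property to take over.
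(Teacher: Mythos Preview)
Your proposal is correct and follows essentially the same approach as the paper: the paper states a value-oracle analog of Lemma~\ref{lem:induction} (with an identical inductive proof, omitted there), reduces the $\eps$-optimality of $\bar{\x}_{M+1}$ to the gradient-oracle argument verbatim, and then sums the per-epoch budgets $T_m = \tilde{\mathcal{O}}(G_m^2/\sigma\eps_m) = \tilde{\mathcal{O}}(d^2C^2/\sigma^2\delta_m^4)$ as a geometric series dominated by $m=M$ to obtain $\tilde{\mathcal{O}}(d^2/\sigma^2\eps^4)$. Your identification of the single quantitative change, $G_m = dC/\delta_m$ in place of $L$, and its downstream effect on the rate is exactly the point.
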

\begin{algorithm}[t] 
\caption{$\text{GradOpt}_V$  }
    \begin{algorithmic}
    \STATE \textbf{Input}:  target error $\eps$, maximal failure probability $p$, decision set $\K$
    \STATE  Choose $\bar{\x}_1 \in \K $ uniformly at random. 
     \STATE  Set $\delta_1 =  \textrm{diam}(\K)/2$, $\tilde{p} = p/M$, and $M= \log_2 \frac{1}{ \alpha_0\eps}$ where  
     $\alpha_0 = \min\{\frac{1}{2L\textrm{diam}(\K)},\frac{2\sqrt{2}}{\sqrt{\sigma} \textrm{diam}(\K)} \}$
    \FOR { $m=1$ to $M$} 
	\STATE // Perform SGD over $\fhat_{\delta_m}$
	\STATE Set $\eps_m: = \sigma \delta_m^2/32$, and 
	$$T_F = \frac{12480}{\sigma \eps_m}\frac{d^2C^2}{\delta_m^2}\log\big(\frac{2}{\tilde{p}}+2\log\frac{12480d^2C^2}{\sigma\eps_m\delta_m^2}  \big) $$
	\STATE Set shrinked decision set,
	$$\K_{m} : = \K\cap B(\bar{x}_m,1.5\delta_m)$$ 
	\STATE Set gradient oracle for $\fhat_{\delta_m}$,
	 $$\text{GradOracle}(\cdot) =\text{SGO}_V(\cdot,\delta_m)$$
	\STATE Update:
	 $$\bar{\x}_{m+1} \gets \text{Suffix-SGD}(T_F ,\K_m,\bar{\x}_{m}, \text{GradOracle} )$$
            \STATE $\delta_{m+1} = \delta_m/2$
            \ENDFOR
            \STATE \textbf{Return}: $\bar{\x}_{M+1}$
    \end{algorithmic}
   \label{alg:generic2}
   \end{algorithm}

\subsection{Analysis}
\label{sec:AnalysisValue}
Notice that at each epoch $m$ of $\text{GradOpt}_V$, it initiates $\text{Suffix-SGD}$ with a gradient oracle  $\text{SGO}_V(\cdot,\delta_m)$. According to Lemma~\ref{lem:Unbiasedness2}, $\text{SGO}_V(\cdot,\delta_m)$ produces an unbiased and ${d C}/{\delta_m}$-bounded estimates for the gradients of $\fhat_{\delta_m}$, thus in the analysis of each epoch we can use 
Corollary~\ref{cor:Shamir} for $\fhat_{\delta_m}$, taking $G={d C}/{\delta_m}$.

Following is our key Lemma:
\begin{lemma} \label{lem:induction2}
Consider $M$, $\K_m$ and  $\bar{\x}_{m+1}$ as defined in Algorithm~\ref{alg:generic2}. Also denote by $\x^*_m$ the minimizer of $\fhat_{\delta_m}$ in $\K$. Then the following  holds for all  $1\leq m \leq M$ w.p.$\geq 1-p$:
\begin{enumerate}
\item The smoothed version $\fhat_{\delta_m} $is $\sigma$-strongly convex over $\K_m$, and $\x_{m}^* \in \K_m$.
\item Also,  $\fhat_{\delta_m}(\bar{\x}_{m+1}) - \fhat_{\delta_m}(\x_m^*) \leq \sigma \delta_{m+1}^2/8$
\end{enumerate}
\end{lemma}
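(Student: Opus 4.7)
The plan is to follow the exact same induction-on-$m$ scheme used in Lemma~\ref{lem:induction}, since the only structural difference between Algorithms~\ref{alg:generic} and \ref{alg:generic2} is the gradient oracle fed into \text{Suffix-SGD}: in the value-feedback case we use $\text{SGO}_V(\cdot,\delta_m)$ rather than $\text{SGO}_G(\cdot,\delta_m)$. By Lemma~\ref{lem:Unbiasedness2}, this oracle still produces unbiased estimates of $\nabla\fhat_{\delta_m}$, but the boundedness constant is now $G_m = dC/\delta_m$ rather than $L$. The larger $G_m$ is precisely why the $T_F$ budget in Algorithm~\ref{alg:generic2} carries the extra $d^2C^2/\delta_m^2$ factor relative to Algorithm~\ref{alg:generic}. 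With that substitution, Corollary~\ref{cor:Shamir} applied to $\fhat_{\delta_m}$ with $G = dC/\delta_m$ yields, after $T_F$ calls, a point of excess loss at most $\eps_m = \sigma\delta_m^2/32 = \sigma\delta_{m+1}^2/8$, with failure probability at most $\tilde p = p/M$ per epoch.

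For the base case $m=1$, note $\delta_1 = \text{diam}(\K)/2$ forces $\K_1 = \K\cap B(\bar{\x}_1,1.5\delta_1) = \K$, so $\x_1^* \in \K_1$ trivially, and $\sigma$-niceness gives $\sigma$-strong convexity of $\fhat_{\delta_1}$ on $B(\x_1^*,3\delta_1) \supseteq \K_1$. The Suffix-SGD step then delivers $\fhat_{\delta_1}(\bar\x_2) - \fhat_{\delta_1}(\x_1^*) \leq \sigma\delta_2^2/8$ with probability at least $1 - \tilde p$, exactly as in Lemma~\ref{lem:induction}.

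For the inductive step, assume the two conclusions hold at stage $m$. The strong-convexity inequality~\eqref{eq:strongCvxity} applied to $\fhat_{\delta_m}$ gives
\[
\|\bar{\x}_{m+1} - \x_m^*\| \leq \sqrt{\tfrac{2}{\sigma}\bigl(\fhat_{\delta_m}(\bar\x_{m+1}) - \fhat_{\delta_m}(\x_m^*)\bigr)} \leq \delta_{m+1}/2.
\]
The centering property of $\sigma$-niceness supplies $\x_{m+1}^*$ with $\|\x_m^* - \x_{m+1}^*\| \leq \delta_m/2 = \delta_{m+1}$, so by the triangle inequality $\|\bar{\x}_{m+1} - \x_{m+1}^*\| \leq 1.5\delta_{m+1}$. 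Hence $\x_{m+1}^* \in \K_{m+1} = \K \cap B(\bar{\x}_{m+1},1.5\delta_{m+1}) \subseteq B(\x_{m+1}^*,3\delta_{m+1})$, and the local-strong-convexity clause of $\sigma$-niceness certifies that $\fhat_{\delta_{m+1}}$ is $\sigma$-strongly convex on $\K_{m+1}$. This proves the first conclusion at stage $m+1$. Now re-invoke Corollary~\ref{cor:Shamir} on $\fhat_{\delta_{m+1}}$ with $G = dC/\delta_{m+1}$: the $T_F$ chosen in Algorithm~\ref{alg:generic2} is exactly the threshold the corollary requires to drive the excess loss below $\eps_{m+1} = \sigma\delta_{m+1}^2/32 = \sigma\delta_{m+2}^2/8$, giving the second conclusion at stage $m+1$ with failure probability at most $\tilde p$.

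A union bound over the $M$ epochs then yields overall failure probability at most $M\tilde p = p$, as claimed. The only real subtlety is tracking the $\delta_m$-dependence of the gradient-estimate bound $G_m = dC/\delta_m$: because $\delta_m$ halves each epoch, $G_m^2$ doubles in a way that, combined with the shrinking target $\eps_m$, produces the $d^2/\delta_m^4$-type cost per epoch — and this is exactly what will later translate into the overall $\tilde O(d^2/\sigma^2\eps^4)$ rate in Theorem~\ref{thm:Main2}. Apart from this bookkeeping, no new ideas beyond those in Lemma~\ref{lem:induction} are needed.
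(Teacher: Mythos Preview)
Your proposal is correct and follows precisely the approach the paper itself prescribes: the paper explicitly omits the proof of Lemma~\ref{lem:induction2}, stating only that it is similar to the proof of Lemma~\ref{lem:induction}, and you have faithfully carried out that induction with the single required change of substituting the gradient-norm bound $G=dC/\delta_m$ (from Lemma~\ref{lem:Unbiasedness2}) for $G=L$ when invoking Corollary~\ref{cor:Shamir}. One harmless slip in your closing commentary: since $\delta_m$ halves each epoch, $G_m^2=d^2C^2/\delta_m^2$ quadruples rather than doubles, but this does not affect the argument.
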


The proof of Lemma~\ref{lem:induction2} is similar to the proof of Lemma~\ref{lem:induction} given in Section~\ref{sec:AnalysisGrad},  we therefore omit the details.

We are now ready to prove Theorem~\ref{thm:Main2}:
\begin{proof}[proof of Theorem~\ref{thm:Main2}]
Let $\bar{\x}_{M+1}$ be the output of 
Algorithm~\ref{alg:generic2}.  Similarly to the proof of Theorem~\ref{thm:Main}, we can show that for every $\x\in\K$:
\begin{align*}
f(\bar{\x}_{M+1}) - f(\x) \leq \eps
\end{align*}
Let $T_{\text{total}}$, be the total number of queries made by  Algorithm~\ref{alg:generic2}, then we have:
\begin{align*}
T_{\text{total}} &\leq  \sum_{m=1}^{M} \frac{12480d^2 C^2}{\sigma \eps_m \delta_m^2}\log\Gamma \\
&\leq \sum_{m=1}^{M} \frac{12480d^2C^2}{\sigma  (\sigma \delta_m^2/32)\delta_m^2}\log\Gamma \\
&\leq \frac{4\cdot 10^5 d^2C^2\log\Gamma }{\sigma^2}  \sum_{i=1}^{M} \frac{8^{i-1}}{\delta_1^4 }\\
&\leq  \frac{6 \cdot 10^4 d^2C^2\log\Gamma }{\sigma^2}  \frac{8^{M}}{\delta_1^4 } \\
& \leq   \frac{6 \cdot 10^4 d^2C^2\log\Gamma }{\sigma^2}  \max\{ {256 L^4},{\sigma^2}/{4} \} \frac{1}{\eps^4} 
\end{align*}
here we used the notation:
\begin{align*}
\Gamma&: = \frac{2M}{p}+2\log(12480d^2C^2/\sigma\eps_M \delta_M^2) \\
&\leq \frac{2M}{p}+2\log(4 \cdot 10^5 d^2C^2  \max\{ {256 L^4},\frac{{\sigma^2}}{4} \}/\sigma^2 \eps^4) 
\end{align*}
\end{proof}

\section{Experiments}
\label{section:experiments}
In the last two decades, performing complex learning tasks using Neural-Network (NN) architectures has become an active and promising line of research.  
Since learning NN architectures essentially requires to solve a hard non-convex program, we have decided to focus our empirical study on this  type of tasks.
As a test case, we train a NN with a single hidden layer of
  $30$ units over the MNIST data set. We adopt the experimental setup of \cite{BengioDauphin2014identifying}  and train over a down-scaled version of the data, i.e., the original $28\times 28$ images of MNIST were down-sampled to the size of $10\times 10$. We use  a ReLU activation function, and  minimize the square loss.

\begin{figure}[h]
\centering
\subfigure[]{ 
\includegraphics[trim = 15mm 67mm 17mm 69mm, clip,width=0.3\textwidth ]{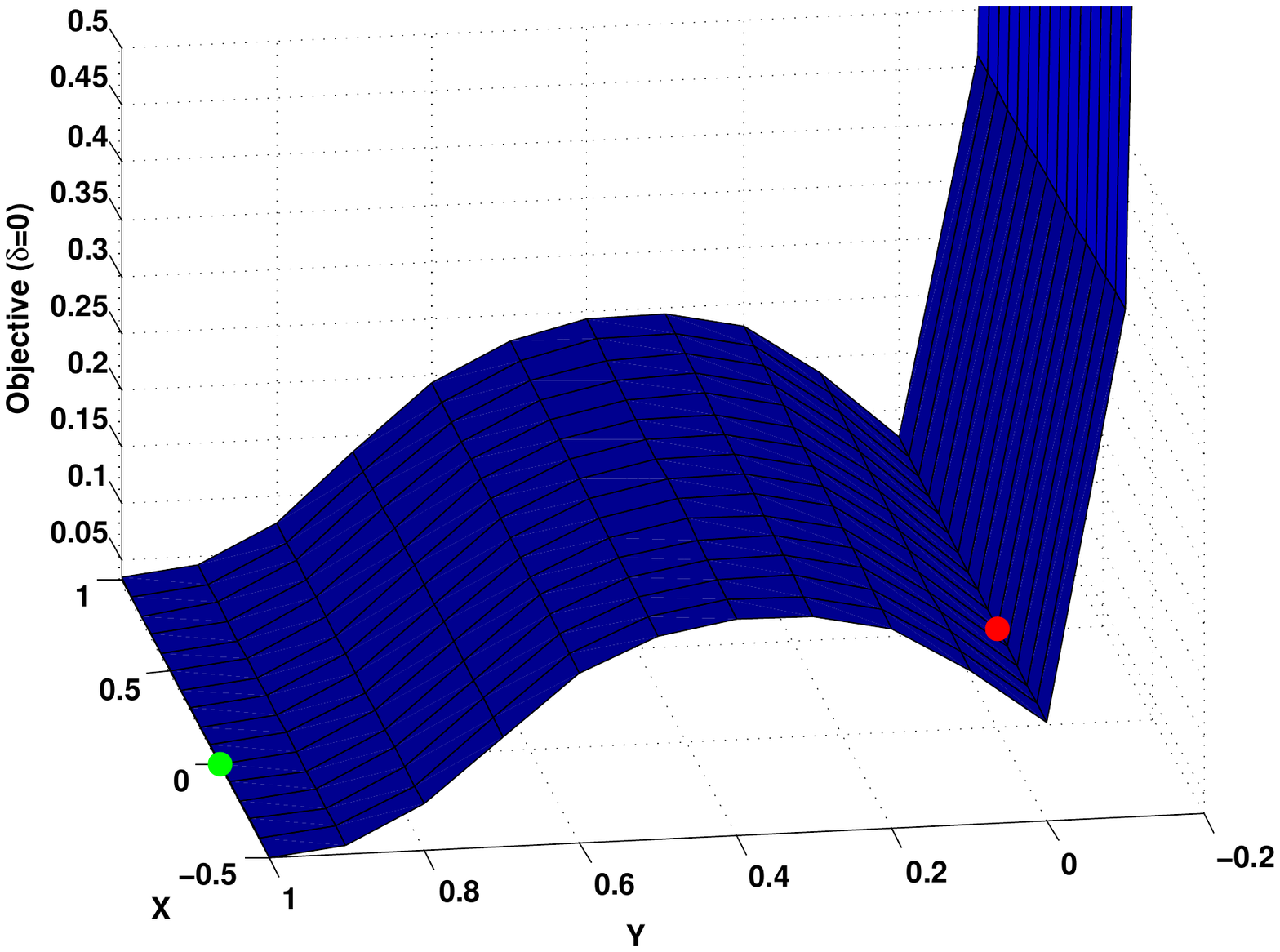}}
\subfigure[]{
 \includegraphics[trim = 15mm 67mm 17mm 69mm, clip,width=0.3\textwidth ]{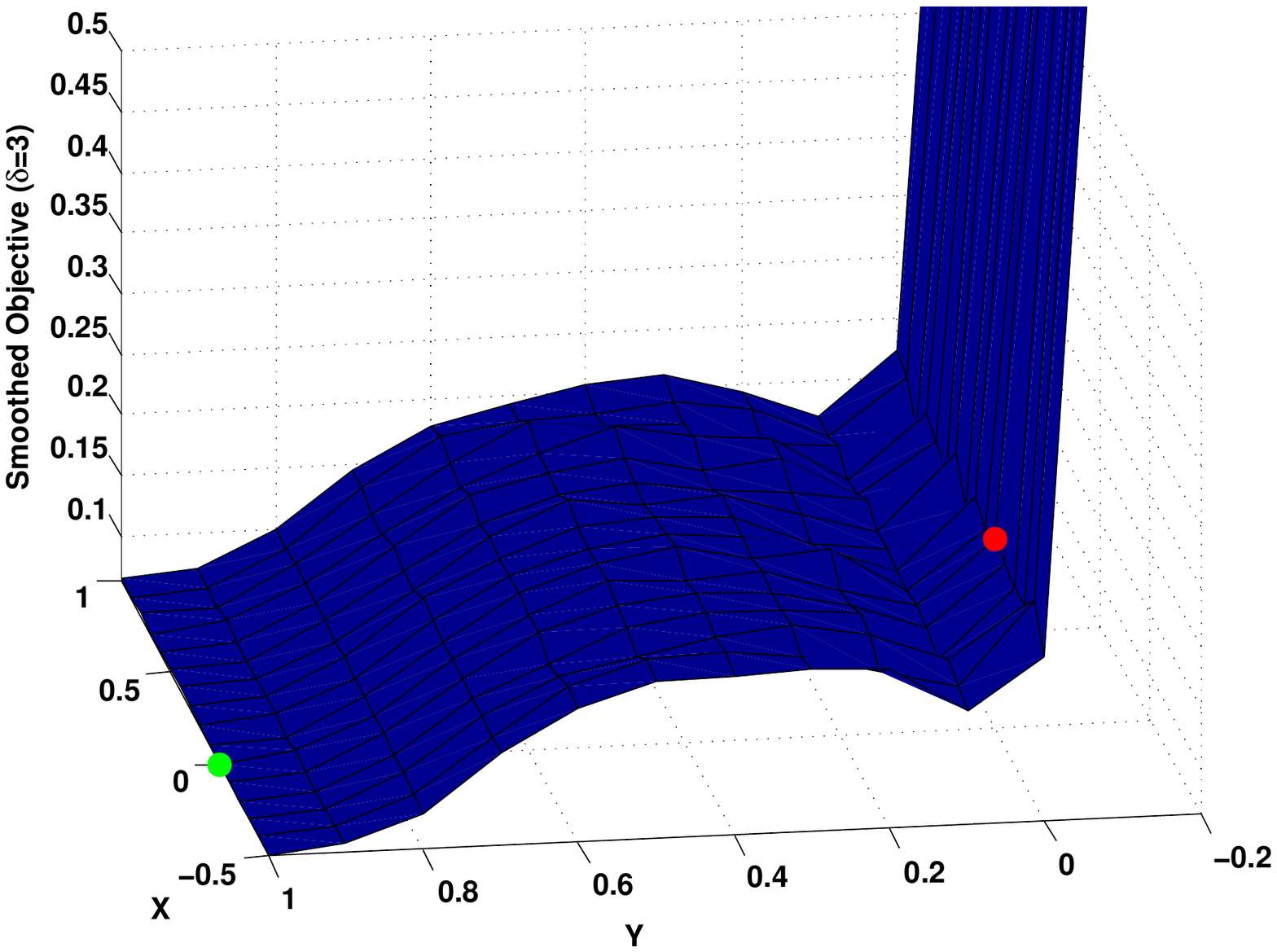}}
 \subfigure[]{
\includegraphics[trim = 15mm 67mm 17mm 69mm, clip, width=0.3\textwidth ]{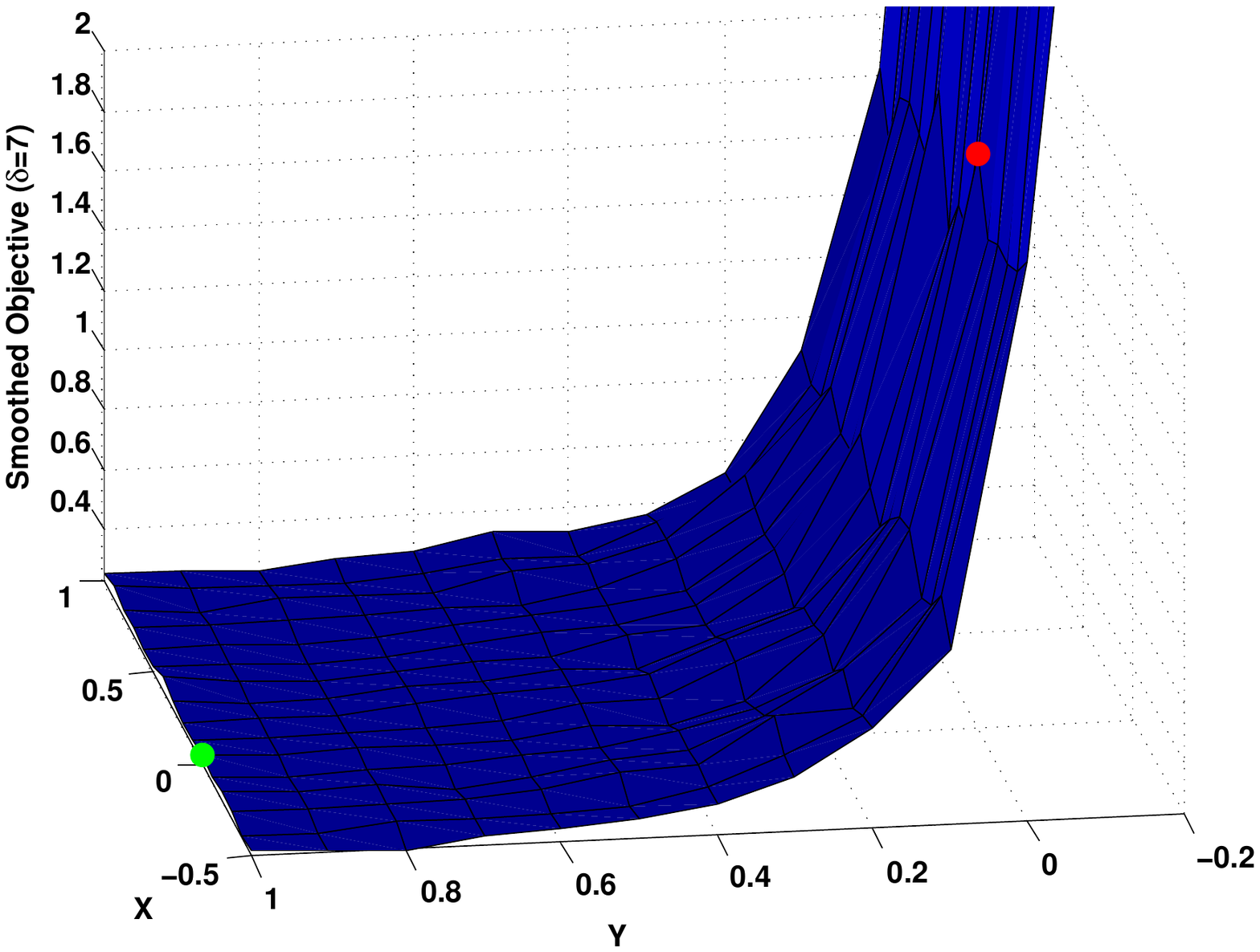}}
\caption{The objective near a stall point. Left: $\delta=0$. Middle:
  $\delta=3$. Right: $\delta=7$.} 
\label{fig:Object3D}
\end{figure}

%

\subsection{Smoothing the NN }
First, we were interested in exploring the non-convex structure of the
above NN learning task, and check whether our definition of
$\sigma$-nice complies with this structure.  We started by running
MSGD (Minibatch Stochastic Gradient Descent) on the problem, while
using a batch size of $100$, and a step size rule of the form $\eta_t
= \eta_0(1+\gamma t)^{-3/4}$, where $\eta_0 = 0.01,\; \gamma =
10^{-4}$. This choice of step size rule was the most effective among a
grid of rules that we examined.  We have found out that  MSGD frequently
``stalls" in areas with a relatively high loss, here we relate to
points at the end of such run as stall-points.

In order to learn about the non-convex nature of the problem, we examined the objective values along two directions around  stall-points.
The first direction was the gradient at the stall point, and the
second direction was the line connecting the stall-point to $\x^*$,
where $\x^*$ is the best weights configuration of the NN that we were
able to find. A drawing depicting typical results appears on the left
side of Figure~\ref{fig:Object3D}.
The stall-point appears in red, and $\x^*$  in green; also the axis marked as $X$ is the gradient direction, and one marked $Y$ is the direction between stall-point and $\x^*$. Note that the stall-point is inside a narrow ``valley", which prevents MSGD from ``seeing"  $\x^*$, and so 
it seems that MSGD slowly progresses downstream. Interestingly, the objective  around $x^*$ seems strongly-convex in the direction of  the stall point.

On the middle of Figure~\ref{fig:Object3D}, we present the $\delta=3$
smoothed version of the same objective that appears on the left side of Figure~\ref{fig:Object3D}.
We can see that the ``valley" has not vanished, but the gradient of the smoothed version leads us slightly towards $\x^*$ and out of the original ``valley".
On the right side of Figure~\ref{fig:Object3D}, we present the
$\delta=7$ smoothed version of  the objective. 
We can see that due to the coarse smoothing, the ``valley" in which MSGD was stalled, has completely dissolved, and the gradient of this version leads us towards $\x^*$.

\subsection{Graduated Optimization of NN }
Here we present experiments that demonstrate the effectiveness of $\text{GradOpt}_G$ (Algorithm~\ref{alg:generic}) in training  the NN mentioned above. 
First, we wanted to learn if smoothing can help us escape points where MSGD stalls. We used MSGD ($\delta=0$) to train the NN, and as before 
we found that its progress slows down,  yielding  relatively high error. We then took the point that MSGD reached after $5\cdot10^4$ iteration and initialized an optimization over the smoothed versions of the loss; this was done using smoothing values of $\{1,3,5,7\}$.
In Figure~\ref{fig:InitDelta} we present the results of the above experiment. 

\begin{figure}[t]
\centering
\includegraphics[trim = 2mm 55mm 11mm 60mm, clip, width=0.45\textwidth ]{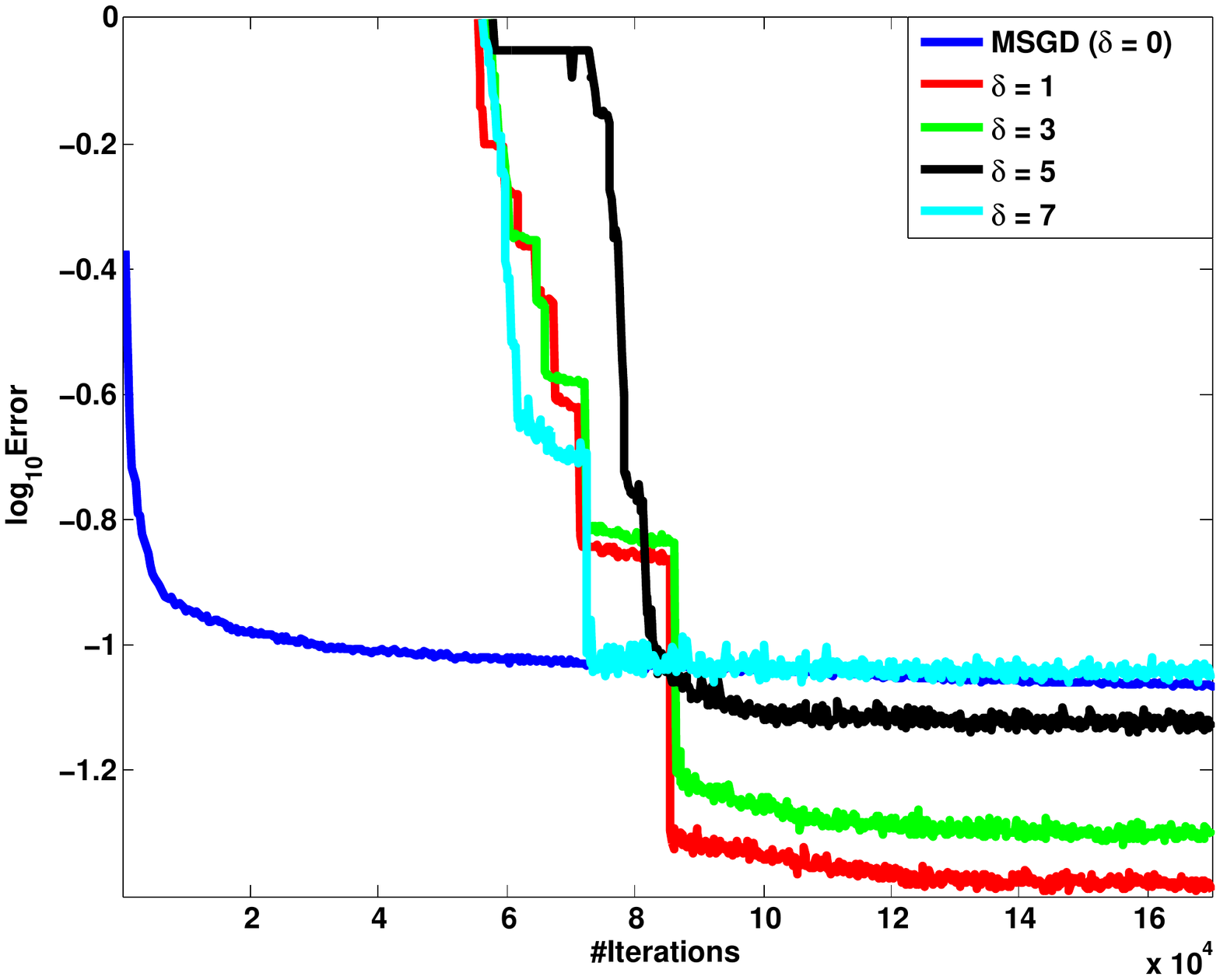}
\caption{Running optimization with  fixed smoothing values, starting at the point where MSGD stuck after $5\cdot10^4$ iterations. } 
\label{fig:InitDelta}
\end{figure}
As seen in Figure~\ref{fig:InitDelta}, small $\delta$'s converge slower than large $\delta$'s, but produce a much better solution.   
Furthermore, the initial optimization progresses in leaps, for large $\delta$'s the leaps are sharper, and lower $\delta$'s demonstrate smaller leaps. We believe that these leaps are associated with the advance of the optimization from one local ``valley" to another;  
Larger values of  $\delta$  dissolve the ``valleys" much easily, but converge to points with higher errors than small $\delta$'s, due to the increase of the bias with smoothing.

In Figure~\ref{fig:GradOpt}  we compare our complete graduated optimization algorithm, namely $\text{GradOpt}_G$ (Alg.~\ref{alg:generic}) 
 to MSGD. We started with an initial smoothing of $\delta=7$, which decayed according to $\text{GradOpt}_G$.
 Note that $\text{GradOpt}_G$ progresses very fast and yields a  much better solution than MSGD.

\begin{figure}[h]
\centering
\includegraphics[trim = 2mm 57mm 11mm 65mm, clip, width=0.45\textwidth ]{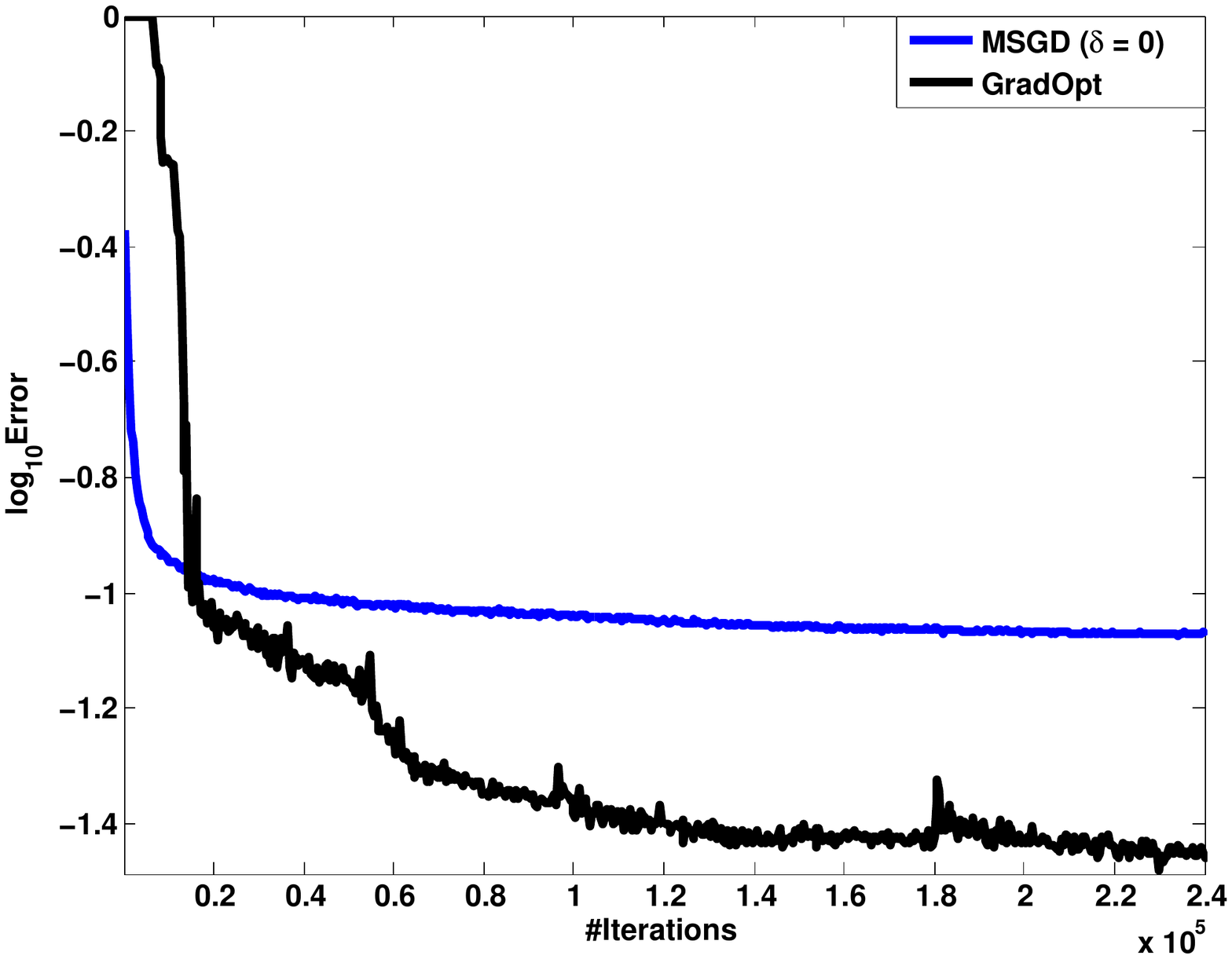}
\caption{Comparison between MSGD and $\text{GradOpt}_G$.} 
\label{fig:GradOpt}
\end{figure}

\section{Discussion}

We have described a family of non-convex functions which admit efficient optimization via the graduated optimization methodology, and gave the first rigorous analysis of a first-order algorithm in the stochastic setting.   

We view it as only a first glimpse of the potential of graduated optimization to provable non-convex optimization, and amongst the interesting questions that remain we find
\begin{itemize}
\item
Is $\sigma$-niceness necessary for convergence of first-order methods to a global optimum? Is there a more lenient property that better captures the power of graduated optimization?  
\item
Amongst the two properties of $\sigma$-niceness, can their parameters be relaxed in terms of the ratio of smoothing to strong-convexity, or to centering? 
\item
Can second-order/other methods give rise to better convergence rates / faster algorithms for stochastic or offline $\sigma$-nice non-convex optimization? 
\end{itemize}

\bibliographystyle{abbrvnat}
\bibliography{bib}

\newpage
\appendix

\end{document}